%% file: iclr2027_conference.tex
\documentclass{article} 
\usepackage{iclr2027_conference,times}
\usepackage{dsfont}
\usepackage{caption}
\input{math_commands.tex}

\usepackage{multirow}
\usepackage{booktabs}
\usepackage{graphicx}
\usepackage{subcaption}

\usepackage{hyperref}
\usepackage{url}
\usepackage{amsmath}
\usepackage{amssymb}
\usepackage{mathtools}
\usepackage{amsthm}
\usepackage{algorithm, algorithmic}
\usepackage{comment}

\theoremstyle{plain}
\newtheorem{theorem}{Theorem}
\newtheorem{proposition}{Proposition}
\newtheorem{lemma}{Lemma}
\newtheorem{corollary}{Corollary}
\theoremstyle{definition}
\newtheorem{definition}{Definition}

\theoremstyle{remark}
\newtheorem{remark}{Remark}

\iclrfinalcopy 
\begin{document}

\title{Matrix AdaGrad: Row-wise and Column-wise Adaptive Subgradient Methods}

\author{Wenpeng Zhang \\
Independent Reasearcher \\
\texttt{zhangwenpeng0@gmail.com} \\
\And
Runsheng Yu \\
Independent Reasearcher \\
\texttt{runshengyu@gmail.com} \\
\AND
Peilin Zhao \\
School of Artificial Intelligence \\
Shanghai Jiao Tong University\\
\texttt{peilinzhao@sjtu.edu.cn}
}

%

\maketitle
\lhead{Preprint}

\begin{abstract}
Adaptive optimization methods such as AdaGrad and Adam are widely used in modern deep neural network training, but their adaptive scaling is primarily designed for vector-valued parameters and does not explicitly exploit matrix structure. Recent matrix-aware optimizers demonstrate the benefits of structured optimization, yet a general theoretical framework for deriving matrix-aware adaptivity comparable to that of AdaGrad remains lacking. In this work, we develop an Online Mirror Descent framework with adaptive proximal functions for matrix-valued parameters, providing a principled methodology for deriving matrix-aware adaptive optimization through online regret minimization. By introducing row-wise and column-wise matrix proximal functions, our framework explicitly reveals the trade-off governing adaptive scaling: increasing the scaling factors reduces the gradient-dependent dual norm term while increasing the cost of evolving the proximal geometry. In the row-wise setting, this trade-off becomes separable under diagonal parameterization, allowing the adaptive scaling for each row to be derived independently by minimizing its corresponding row-wise regret bound. The column-wise counterpart follows directly by applying the row-wise construction to the transposed matrix. This framework yields Row-wise Matrix AdaGrad and Column-wise Matrix AdaGrad as concrete instantiations, with regret guarantees that are strictly tighter than those of entry-wise AdaGrad under row-sparse or column-sparse gradient structures. Experiments on matrix factorization and stacked deep MLP training further demonstrate the benefits of matrix-aware adaptive scaling, yielding improved optimization performance in both settings and enhanced optimization stability and trainability at larger learning rates and greater network depths in the latter.
\end{abstract}

\section{Introduction}

Adaptive optimization methods have emerged as a cornerstone of modern machine learning, playing a pivotal role in training large-scale deep neural networks. Unlike classical stochastic gradient descent (SGD), which applies a uniform learning rate to all parameters, adaptive optimizers dynamically adjust parameter-wise learning rates based on statistics of past gradients. By capturing heterogeneous optimization dynamics across parameters, these methods facilitate more efficient and stable training in high-dimensional parameter spaces. Among various adaptive optimization algorithms, Adam~\citep{kingma2015adam} and its decoupled weight decay variant, AdamW~\citep{loshchilov2019decoupled}, have become the de facto standard for training contemporary deep learning models, including large language models (LLMs)~\citep{touvron2023llama, team2025kimi, xu2026deepseek}, diffusion models~\citep{peebles2023scalable, arriola2025block}, recommendation models~\citep{pmlr-v235-zhai24a}, and beyond.

Underlying the empirical success of adaptive optimizers is a rich theoretical lineage rooted in AdaGrad~\citep{duchi2011adaptive, McMahanS10}, which established a principled interpretation of adaptivity through online convex optimization (OCO). Specifically, AdaGrad is derived within the online mirror descent (OMD) framework~\citep{orabona2019modern}, where proximal functions characterize the optimization geometry and regulate the gradient steps. The key insight is that gradients observed in earlier iterations provide valuable information about the underlying geometry of the optimization problem, and incorporating such knowledge into the proximal function enables more informative gradient-based learning. Rather than relying on a fixed proximal function, AdaGrad dynamically modifies the proximal function based on progressively accumulated gradient information. At each iteration, AdaGrad constructs the proximal function by minimizing the regret bound up to that point, yielding a sequence of data-dependent proximal functions whose regret guarantee is competitive with that of the best proximal function chosen in hindsight. In essence, AdaGrad frames adaptivity as the dynamic construction of optimization geometry from gradient history, providing a rigorous theoretical foundation for adaptive optimization.

Despite this theoretical rigor, classical adaptive optimizers, including AdaGrad, Adam, and AdamW, are designed primarily for vector-valued parameters, whereas most parameters in neural networks are inherently matrices. When applied to these matrix-valued parameters, such methods typically disregard their intrinsic matrix geometry, effectively treating them as flattened vectors of independent scalar entries. This limitation has motivated a growing body of research on matrix-aware optimization, which instead seeks to preserve and exploit the intrinsic geometry of matrix-valued parameters. Within this paradigm, methods such as Shampoo~\citep{gupta2018shampoo} and Muon~\citep{jordan2024muon} have demonstrated the practical benefits of leveraging matrix structure in large-scale model training. In particular, Muon has emerged as a compelling alternative to AdamW and has been widely adopted for training frontier open-source LLMs~\citep{team2025kimi,zeng2025glm,xu2026deepseek}. Furthermore, beyond exploiting matrix structure, recent research has increasingly explored various forms of adaptivity tailored to matrix-valued parameters. Examples include block-wise adaptation~\citep{zhang2025adam}, row-wise adaptation~\citep{li2025normuon}, and one-sided Shampoo preconditioners~\citep{pmlr-v267-xie25j,an2026asgo}. However, these designs are largely motivated by approximations to existing adaptive preconditioners, rather than being derived directly from first principles. Consequently, they lack a principled theoretical foundation comparable to that of AdaGrad, where the form of adaptivity emerges naturally from online regret minimization and the dynamic construction of proximal geometry.

In this work, we fill this theoretical gap by developing a general online mirror descent framework with adaptive proximal functions for matrix-valued parameters, within which matrix-aware adaptivity can be derived via online regret minimization in a principled manner. The first step in establishing this framework is to define appropriate proximal functions, which determine the underlying optimization geometry. A straightforward approach is to directly apply vector-valued proximal functions to the vectorized matrix parameters. However, this approach fails to preserve the intrinsic geometry of matrix-valued parameters and incurs prohibitive computational and memory costs. To avoid these limitations, we define proximal functions directly in the matrix space. Specifically, we introduce row-wise and column-wise matrix proximal functions defined as squared Mahalanobis-type norms over the rows or columns of a matrix, respectively. By treating each row or column as a coherent, structurally coupled unit rather than merely a collection of independent entries, these proximal functions capture both the scaling of individual rows or columns and the coupling between them. Computationally, the metric matrix inducing the Mahalanobis geometry is applied to the matrix variable within the Frobenius inner product via left multiplication for the row-wise geometry and right multiplication for the column-wise geometry. Next, we formalize the online mirror descent framework for matrix-valued decision variables, with an extended Bregman matrix divergence serving as the proximal regularizer in the OMD update rule. We then equip this OMD framework with adaptive matrix proximal functions that are allowed to evolve over iterations and establish a generic regret bound that provides the analytical foundation for deriving specific matrix-aware adaptive schemes.

To proceed, we first instantiate the matrix proximal function with a positive definite diagonal metric matrix, following the practical and computationally efficient choice commonly adopted in AdaGrad~\citep{duchi2011adaptive}. This choice also facilitates clean and intuitive analysis while avoiding excessive technical complexity. We next analyze the structure of the regret bound in detail, focusing on its three constituent terms and the role of each. The first term depends solely on initialization, while the remaining two terms depend explicitly on $T$ and thus govern the asymptotic behavior of the regret bound. The third term is given by the accumulated squared dual norms of the gradients, reflecting the cumulative cost of the observed gradients under the proximal geometry. The second term captures the differences between successive Bregman matrix divergences that arise from the temporal adaptation of the proximal function. In particular, if the proximal function is fixed, this term vanishes. Intuitively, allowing the proximal geometry to adapt dynamically to the observed gradients can reduce the dual-norm term, but such adaptation comes at a cost—specifically, it incurs an additional penalty quantified by the Bregman matrix divergence differences.

Through further analysis, focusing first on the row-wise setting, we show that the three terms in the regret bound can ultimately be reduced to two terms with opposing dependence on the adaptive scaling factors, revealing the fundamental trade-off that governs the evolution of the matrix proximal geometry. Specifically, one term grows linearly with the scaling factors, while the other decreases inversely with them. This opposing dependence indicates the existence of a retrospectively optimal proximal geometry that balances these two terms. Crucially, under the diagonal parameterization, the regret bound further decomposes into a fully row-wise separable form, allowing the optimal proximal scaling to be determined independently for each row. The associated per-row regret bound admits a simple two-term trade-off between its scaling factor and the corresponding reciprocal. Minimizing this bound via the arithmetic–geometric mean inequality, with equality attained when the two terms are balanced, yields the optimal scaling proportional to the square root of the accumulated squared $\ell_2$-norms of the corresponding row-wise gradients. This optimal scaling provides a concrete form for the adaptive scaling matrix, and consequently, a practical rule for evolving the matrix proximal geometry, leading to the Row-wise Matrix AdaGrad (Row-AdaGrad) algorithm. We then derive the final regret bound for Row-AdaGrad and provide an illustrative example showing that this bound is strictly tighter than that of standard AdaGrad when the gradient matrices are row-sparse. The column-wise counterpart follows analogously by applying the row-wise formulation to the transposed problem and then transposing back. We show that the column-wise proximal function is exactly the row-wise proximal function applied to the transposed variable, establishing a one-to-one correspondence that preserves the problem structure. The Column-wise Matrix AdaGrad (Column-AdaGrad) algorithm and its regret bound are thus obtained directly from the row-wise results, without requiring a separate derivation. Empirically, we demonstrate that matrix-aware adaptivity can improve both optimization performance and training stability. On matrix factorization, Row-AdaGrad substantially outperforms its entry-wise counterpart when the adaptive geometry aligns with the row-wise parameter structure. Moreover, in a challenging setting without normalization layers or residual connections, row-wise scaling enables effective training at greater depths and larger learning rates, while entry-wise scaling becomes unstable or fails entirely.

In summary, our work goes beyond the specific algorithms of Row-AdaGrad and Column-AdaGrad by providing a general framework for understanding and deriving matrix-aware adaptivity through the evolution of matrix proximal geometry.
Our analysis explicitly reveals a fundamental trade-off underlying adaptive optimization: dynamically adapting the geometry can reduce the cumulative dual-norm cost of the gradients, but the resulting temporal variation of the proximal geometry incurs an additional penalty. The specific form of adaptivity is governed by the regret trade-off between the benefit and cost of tailoring the proximal geometry. Under the diagonal parameterization, this trade-off admits an explicit and simple separable form that is readily amenable to analysis, making transparent how the resulting adaptive scaling is derived from the regret bound. In contrast, standard AdaGrad typically characterizes its adaptive scaling through a constrained hindsight optimization problem, motivated primarily by reducing the contribution of the dual-norm term~\citep{duchi2011adaptive}. This characterization provides less direct and intuitive insight into the fundamental mechanism underlying adaptivity. Our framework therefore provides a rigorous yet intuitive theoretical foundation for the design and analysis of matrix-aware adaptive optimizers, paving the way for more principled research in this area. In particular, new matrix proximal functions can serve as a principled starting point for deriving matrix-aware adaptive optimizers with different structural properties.

\section{Row-wise and Column-wise Matrix Proximal Functions}

In this section, we introduce matrix proximal functions for defining the optimization geometry of matrix-valued parameters. Specifically, we propose row-wise and column-wise matrix proximal functions. The row-wise proximal function induces a geometry over rows by weighting row inner products, whereas the column-wise counterpart operates analogously over columns. For comparison, we also review the standard proximal function for vector-valued parameters and its extension to vectorized matrix parameters, illustrating the limitations of direct vectorization in computational efficiency and in capturing intrinsic matrix structure.

\subsection{Vectorized Proximal Function}

For any vector $ x \in \mathbb{R}^d $, the vector proximal function associated with a positive definite metric (or scaling) matrix $A \in \mathbb{R}^{d \times d}$ ($ A \succ 0 $) is defined as
$$\psi_A(x) = \frac{1}{2} \|x\|_A^2 = \frac{1}{2} \langle x, Ax \rangle,$$
where $\langle \cdot, \cdot \rangle$ denotes the standard Euclidean inner product, and $\|x\|_A = \sqrt{\langle x, Ax \rangle}$ is the Mahalanobis norm induced by $A$. Expanding the quadratic form in terms of individual coordinates gives
$$\psi_A(x) = \frac{1}{2} \sum_{i=1}^d \sum_{j=1}^d A_{ij} x_i x_j,$$
where the diagonal entries of $A$ scale individual coordinates, while the off-diagonal entries encode pairwise interactions among them. This equips $\psi_A(x)$ with Mahalanobis geometry, generalizing the isotropic Euclidean metric to a problem-specific anisotropic metric that better captures the underlying data geometry and enables more effective optimization.

To define a proximal function for matrix parameters, a straightforward approach is to vectorize the matrix $X \in \mathbb{R}^{m \times n}$ into a vector in $\mathbb{R}^{mn}$ and then directly apply the vector proximal function. Note that matrix vectorization can be performed in multiple ways. Using the standard convention of column-stacking yields the following vectorized proximal function
\[
\psi_B^{\mathrm{vec}}(X) = \frac{1}{2} \|\mathrm{vec}(X)\|_B^2
= \frac{1}{2} \langle \mathrm{vec}(X), B\,\!\mathrm{vec}(X) \rangle,
\]
where $B \in \mathbb{R}^{mn \times mn}$ is a positive definite metric matrix that induces the Mahalanobis norm on the vectorized space $\mathbb{R}^{mn}$.

However, while vectorization provides a straightforward extension from vectors to matrices, it has several drawbacks. First, it treats the matrix as an unstructured one-dimensional vector, thereby obscuring its natural two-dimensional structure and the distinct roles of rows and columns, and making natural matrix operations—such as multiplication, trace, and spectral norms—difficult to express and exploit. Second, the vectorized representation can incur prohibitive computational and memory costs. Constructing and storing a full metric matrix $B \in \mathbb{R}^{mn \times mn}$ is impractical. Even when $B$ is restricted to a diagonal form, it still requires $mn$ parameters, leading to substantial computational and memory overhead for large matrices. 
To address these drawbacks, it is preferable to define proximal functions directly on the matrix space, preserving the inherent matrix structure while allowing for more efficient computation.

\subsection{Row-wise Proximal Function}

We introduce the \emph{row-wise proximal function}, which applies a Mahalanobis-type scaling directly to the rows of a matrix. This proximal function retains the matrix’s row-level structure by treating each row as a holistic unit, rather than as a mere collection of independent scalar entries. Unlike vectorized proximal functions, which flatten the matrix and obscure its inherent organization, the row-wise formulation instead induces a clear and interpretable row-wise geometry that explicitly captures both individual row scaling and inter-row interactions.

Formally, given a positive definite metric matrix $C \in \mathbb{R}^{m \times m}$, i.e., $ C \succ 0 $, for a matrix $X \in \mathbb{R}^{m \times n}$, the row-wise proximal function is defined as
\[
\psi_C(X) = \frac{1}{2} \|X\|_C^2 = \frac{1}{2} \langle X, CX \rangle_F,
\]
where \( \langle \cdot, \cdot \rangle_F \) denotes the Frobenius inner product, defined as $\langle X, Y \rangle_F = \sum_{i=1}^m \sum_{j=1}^n X_{ij} Y_{ij}$, and $\|X\|_C = \sqrt{\langle X, CX \rangle_F}$ is a Mahalanobis-type norm induced by $C$. This norm naturally extends the Mahalanobis norm from vectors to matrices, inducing a Mahalanobis-type geometry over the rows of \(X\). To illustrate the row-wise structure of $\psi_C(X)$ more explicitly, we first rewrite the Frobenius inner product in terms of rows as
\[
\langle X, Y \rangle_F = \sum_{i=1}^m \langle X^{(i)}, Y^{(i)} \rangle,
\]
where $X^{(i)}, Y^{(i)} \in \mathbb{R}^n$ denote the $i$-th rows of $X$ and $Y$, respectively, and \(\langle \cdot, \cdot \rangle\) is the standard Euclidean inner product on \(\mathbb{R}^n\). Leveraging this representation, the proximal function can be further expressed as a weighted sum of row inner products\footnote{Complete derivations of the row-wise and the following column-wise forms are provided in Appendix~\ref{derivation}.}
\[
\psi_C(X) = \frac{1}{2} \|X\|_C^2 = \frac{1}{2} \sum_{i=1}^m \sum_{j=1}^m C_{ij} \langle X^{(i)}, X^{(j)} \rangle.
\]
This form makes it explicit that the diagonal entries of $C$ scale the squared norms of the individual rows, while the off-diagonal entries weight the inner products between distinct rows. When $C = I_m$, the $m \times m$ identity matrix, $\psi_C(X)$ reduces to the standard squared Frobenius norm $\frac{1}{2}\|X\|_F^2$, and when $C = \operatorname{Diag}(c)$ with $c \in \mathbb{R}^m_{++}$, $\psi_C(X)$ scales each row independently by its corresponding diagonal entry $c_i$, without capturing any inter-row interactions.

\subsection{Column-wise Proximal Function}

Analogous to the row-wise proximal function, the \emph{column-wise proximal 
function} applies a Mahalanobis-type scaling directly to the columns of a 
matrix, treating each column as a holistic unit and thereby preserving the 
column-level structure. While the row-wise proximal function captures the 
structural relationships among rows, its column-wise counterpart captures the corresponding relationships among columns, thereby inducing a complementary geometry.

Given a positive definite metric matrix \(D \in \mathbb{R}^{n \times n}\), i.e., $ D \succ 0 $, for a matrix \(X \in \mathbb{R}^{m \times n}\), the column-wise proximal function is defined as
\[
\psi_D(X) = \frac{1}{2} \|X\|_D^2 = \frac{1}{2} \langle X, X D \rangle_F,
\]
where \(\|X\|_D = \sqrt{\langle X, X D \rangle_F}\) is the Mahalanobis-type norm induced by \(D\), which induces a Mahalanobis-type geometry over the columns of \(X\). We remark that the metric matrix \(C\) in the row-wise proximal function multiplies \(X\) from the left, whereas the metric matrix \(D\) in the column-wise proximal function multiplies \(X\) from the right. Analogous to the derivation in the row-wise case, expressing the proximal function explicitly in terms of the columns, we have
$$
\psi_D(X) = \frac{1}{2} \sum_{i=1}^n \sum_{j=1}^n D_{ij} \langle X_{(i)}, X_{(j)} \rangle,
$$
where \(X_{(i)}, X_{(j)} \in \mathbb{R}^m\) denote the \(i\)-th and \(j\)-th columns of \(X\), respectively, and \(\langle \cdot, \cdot \rangle\) denotes the standard Euclidean inner product on \(\mathbb{R}^m\). The diagonal entries of \(D\) scale the squared norms of individual columns, while the off-diagonal entries weight the inner products between distinct columns. When \(D = I_n\), \(\psi_D(X)\) reduces to the standard squared Frobenius norm \(\frac{1}{2}\|X\|_F^2\). More generally, when \(D = \mathrm{Diag}(d)\) with \(d \in \mathbb{R}^n_{++}\), the proximal function reduces to independently weighting the squared norm of each column by the corresponding diagonal entry \(d_j\), without introducing interactions between different columns.

\section{Online Mirror Descent for Matrices with Adaptive Matrix Proximal Functions}

In this section, we formalize the online learning framework upon which our theoretical analysis and algorithmic design are built.

\subsection{Online Mirror Descent with Matrix Decision Variables}

\textbf{Online Convex Optimization with Matrix Decision Variables}\,\, We work within the framework of Online Convex Optimization (OCO)~\citep{zinkevich2003online, hazan2019introduction, orabona2019modern}, specialized to matrix-valued decision variables. OCO can be viewed as a structured repeated game in which a learner and an adversary interact sequentially over $T$ rounds. At each round $t \in \{1, \dots, T\}$, the learner selects a decision matrix $X_t \in \mathcal{X}$ from a convex compact set $\mathcal{X} \subseteq \mathbb{R}^{m \times n}$. The adversary then reveals a convex loss function $f_t : \mathcal{X} \to \mathbb{R}$, and the learner suffers the loss $f_t(X_t)$. The learner's goal is to minimize the \emph{regret}, defined as the difference between the cumulative loss incurred by the learner and that of the best fixed matrix in hindsight:
\[
R(T) = \sum_{t=1}^T f_t(X_t) - \min_{X^* \in \mathcal{X}} \sum_{t=1}^T f_t(X^*).
\]
A meaningful regret is typically required to be \emph{sublinear}, i.e.,
$\lim_{T \to \infty} R(T)/T = 0$,
which implies that, for a sufficiently large number of rounds, the learner's performance approaches that of the optimal fixed decision chosen in hindsight with full knowledge of all rounds.

\textbf{Online Mirror Descent for Matrices}\,\, We now introduce Online 
Mirror Descent (OMD) with matrix decision variables, which serves as the 
foundation for designing our adaptive optimization algorithms. OMD is an 
online learning algorithmic template that generalizes standard gradient-based updates by incorporating a geometry-aware proximal term or regularizer defined through a strictly convex function, which encodes the intrinsic geometry of the decision space and guides the updates to respect the underlying problem structure. In the matrix setting, the proximal term is naturally expressed through the \emph{Bregman matrix divergence}, a generalization of the vanilla (vector) Bregman divergence.

\begin{definition}[Bregman Matrix Divergence]
Let $\mathcal{X} \subseteq \mathbb{R}^{m \times n}$ be a compact convex set with nonempty interior, and let $\psi: \mathcal{X} \to \mathbb{R}$ be a strictly convex and differentiable function on the interior $\mathop{\mathrm{int}} \mathcal{X}$. The \textbf{Bregman Matrix Divergence} with respect to $\psi$, denoted $B_\psi: \mathcal{X} \times \mathop{\mathrm{int}} \mathcal{X} \to \mathbb{R}$\footnote{Restricting $Y \in \mathop{\mathrm{int}} \mathcal{X}$ ensures that $\nabla \psi(Y)$ is well-defined, as $\psi$ may not be differentiable on the boundary.}, is defined as
\[
B_\psi(X; Y) = \psi(X) - \psi(Y) - \langle \nabla \psi(Y), X - Y \rangle_F,
\]
for $X \in \mathcal{X}$ and $Y \in \mathop{\mathrm{int}} \mathcal{X}$, where $\nabla \psi(Y) \in \mathbb{R}^{m \times n}$ is the gradient of $\psi$ at $Y$.
\end{definition}

$B_\psi$ quantifies the deviation between two matrices $X$ and $Y$ under the geometry induced by $\psi$. Similar to the vector case, $B_\psi$ is non-negative due to the strict convexity of $\psi$, and generally asymmetric, as it depends on the reference point $Y$. 

Then, the update rule of OMD for matrices at each round $t$ is given by
\[
X_{t+1} = \mathop{\mathrm{argmin}}_{X \in \mathcal{X}} \Big\{ \langle G_t, X \rangle_F + \frac{1}{\eta} B_\psi(X; X_t) \Big\},
\]
where $\eta > 0$ is the step size, and $G_t \in \partial f_t(X_t)$ denotes a subgradient of the convex loss function $f_t$ at $X_t$. Here, the term $B_\psi(X; X_t)$ acts as a \emph{proximity regularizer}, which encourages $X_{t+1}$ to stay close to the previous iterate $X_t$ under the geometry induced by $\psi$.

Instantiating different $\psi$ yields distinct variants of OMD for matrices, each reflecting a particular geometry associated with a corresponding proximity measure. While one could directly apply a vectorized proximal function to matrix variables, this approach disregards the intrinsic two-dimensional structure and fails to fully exploit the geometric properties of matrices. A more natural alternative is to leverage the row-wise and column-wise matrix proximal functions introduced in the previous section, enabling OMD to better exploit the structured geometries of matrix variables.

\subsection{Adaptive Matrix Proximal Functions}
\label{adaptive proximal}

In the standard OMD framework, the proximal function $\psi$ remains fixed 
throughout the entire learning process. While such a fixed geometry may 
capture certain structural properties of the problem, it cannot accommodate 
the evolving gradient over time. Adaptive subgradient methods, 
such as AdaGrad~\citep{duchi2011adaptive}, address this limitation by 
allowing the proximal function to vary across rounds---replacing $\psi$ with 
a sequence of data-dependent proximal functions $\psi_t$. This 
time-varying geometry enables the algorithm to adapt not only to the 
intrinsic structure of the problem, but also to the gradient information 
accumulated over time, which is key to the success of adaptive methods.

Inspired by the derivation of AdaGrad from OMD, designing matrix adaptive methods similarly requires an appropriate evolution mechanism for the matrix proximal function. At a high level, the starting intuition is analogous to that in the vector case: the regret bound of OMD for matrices depends on the choice of the proximal function $\psi_t$, and by appropriately designing the evolution of $\psi_t$ to better adapt to the revealed gradient information, we can obtain a tighter regret bound. To translate this insight into algorithmic design, we begin by presenting a template regret bound for OMD for matrices with a generic adaptive proximal function, which will guide the subsequent derivation of specific adaptive schemes.

\begin{corollary}
\label{OMD general bound}
Let $\{X_t\}_{t=1}^{T+1} \subseteq \mathop{\mathrm{int}} \mathcal{X}$ be the 
iterates of OMD for matrices with proximal functions $\psi_t$, and let 
$\eta > 0$ be the global step size. Let $\| \cdot \|$ be a norm on 
\(\mathbb R^{m\times n}\) and let $\| \cdot \|_*$ denote its dual norm. Suppose that, 
for each $t$, $\psi_t$ is $\lambda$-strongly convex with respect to 
$\| \cdot \|$ on $\mathcal{X}$. Let $G_t \in \partial f_t(X_t)$ be a 
subgradient of $f_t$ at $X_t$, and let 
$X^\star \in \arg\min_{X \in \mathcal{X}} \sum_{t=1}^T f_t(X)$. Then, we have
\[
\begin{split}
R(T) \leq\; & \frac{B_{\psi_1}(X^\star; X_1)}{\eta}
- \frac{B_{\psi_T}(X^\star; X_{T+1})}{\eta}
+ \frac{1}{\eta} \sum_{t=1}^{T-1}
\Big( B_{\psi_{t+1}}(X^\star; X_{t+1}) - B_{\psi_t}(X^\star; X_{t+1}) \Big) \\
&+ \frac{\eta}{2\lambda} \sum_{t=1}^T \|G_t\|_*^2.
\end{split}
\]
Moreover, since ${B_{\psi_T}(X^\star; X_{T+1})} \geq 0$, dropping the negative terminal term yields the following slightly looser regret bound
\[
R(T)
\;\leq\;
\frac{B_{\psi_1}(X^\star; X_1)}{\eta}
+
\frac{1}{\eta}
\sum_{t=1}^{T-1}
\Big(
B_{\psi_{t+1}}(X^\star; X_{t+1})
-
B_{\psi_t}(X^\star; X_{t+1})
\Big)
+
\frac{\eta}{2\lambda}
\sum_{t=1}^T \|G_t\|_*^2.
\]
\end{corollary}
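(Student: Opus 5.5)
The plan is the standard one-step OMD analysis, followed by a telescoping sum over rounds. Because the claim is stated as a corollary, I assume the underlying per-round inequality is the usual OMD lemma transplanted to matrices with the Frobenius inner product. I would reprove it directly, since every ingredient (the first-order optimality condition, the three-point identity, and the Fenchel–Young inequality) holds verbatim in $\mathbb{R}^{m\times n}$ equipped with $\langle\cdot,\cdot\rangle_F$.

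\textbf{Step 1 (one-round inequality).} By convexity of $f_t$, we have $f_t(X_t)-f_t(X^\star)\le \langle G_t, X_t-X^\star\rangle_F$. Write the update at round $t$ with $\psi_t$, i.e.\ $X_{t+1}=\arg\min_{X\in\mathcal X}\{\langle G_t,X\rangle_F+\frac1\eta B_{\psi_t}(X;X_t)\}$. Since $X_{t+1}\in\mathop{\mathrm{int}}\mathcal X$, the optimality condition gives
\[
\langle \eta G_t+\nabla\psi_t(X_{t+1})-\nabla\psi_t(X_t),\,X^\star-X_{t+1}\rangle_F\ge 0 .
\]
Combining this with the three-point identity $\langle\nabla\psi_t(X_t)-\nabla\psi_t(X_{t+1}),X^\star-X_{t+1}\rangle_F=B_{\psi_t}(X^\star;X_{t+1})+B_{\psi_t}(X_{t+1};X_t)-B_{\psi_t}(X^\star;X_t)$ yields
\[
\eta\langle G_t,X_t-X^\star\rangle_F\le B_{\psi_t}(X^\star;X_t)-B_{\psi_t}(X^\star;X_{t+1})-B_{\psi_t}(X_{t+1};X_t)+\eta\langle G_t,X_t-X_{t+1}\rangle_F .
\]
Next, $\lambda$-strong convexity gives $B_{\psi_t}(X_{t+1};X_t)\ge\frac\lambda2\|X_{t+1}-X_t\|^2$. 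Hölder's inequality for the dual pair $(\|\cdot\|,\|\cdot\|_*)$ with respect to $\langle\cdot,\cdot\rangle_F$, followed by Young's inequality, gives
\[
\eta\langle G_t,X_t-X_{t+1}\rangle_F-\frac\lambda2\|X_t-X_{t+1}\|^2\le\frac{\eta^2}{2\lambda}\|G_t\|_*^2 .
\]
Dividing by $\eta$ then gives the per-round bound
\[
f_t(X_t)-f_t(X^\star)\le \frac1\eta\big(B_{\psi_t}(X^\star;X_t)-B_{\psi_t}(X^\star;X_{t+1})\big)+\frac{\eta}{2\lambda}\|G_t\|_*^2 .
\]

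\textbf{Step 2 (telescoping with changing $\psi_t$).} Sum over $t=1,\dots,T$. The sum $\sum_t [B_{\psi_t}(X^\star;X_t)-B_{\psi_t}(X^\star;X_{t+1})]$ is regrouped as
\[
B_{\psi_1}(X^\star;X_1)-B_{\psi_T}(X^\star;X_{T+1})+\sum_{t=1}^{T-1}\big(B_{\psi_{t+1}}(X^\star;X_{t+1})-B_{\psi_t}(X^\star;X_{t+1})\big).
\]
To see this, pair the negative term of round $t$ with the positive term of round $t+1$; both are evaluated at $X_{t+1}$, but under different potentials. This gives the first bound exactly. The second bound follows by dropping $-B_{\psi_T}(X^\star;X_{T+1})/\eta\le 0$, which is nonpositive because Bregman divergences of convex $\psi_T$ are nonnegative.

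\textbf{Main obstacle.} The delicate point is the bookkeeping of indices in the telescoping when $\psi_t$ varies. In particular, one must fix the convention that round $t$ uses $\psi_t$, so that the mismatch terms appear precisely as $B_{\psi_{t+1}}-B_{\psi_t}$ at $X_{t+1}$. A secondary point is justifying the first-order optimality condition and the three-point identity in matrix form. Both reduce to the vector case by vectorization, because $\langle X,Y\rangle_F=\langle\mathrm{vec}X,\mathrm{vec}Y\rangle$. Differentiability of $\psi_t$ at the iterates is guaranteed by the hypothesis $X_t\in\mathop{\mathrm{int}}\mathcal X$.
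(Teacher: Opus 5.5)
Your proposal is correct and follows essentially the same route as the paper's appendix proof. Both first derive a one-step bound from convexity of $f_t$, the first-order optimality condition of the mirror step, the three-point identity with $(X,Y,Z)=(X_{t+1},X_t,X^\star)$, and strong convexity combined with Young's inequality; they then regroup the summed divergences to expose the mismatch terms $B_{\psi_{t+1}}(X^\star;X_{t+1})-B_{\psi_t}(X^\star;X_{t+1})$ and drop the nonnegative terminal divergence, the paper differing only cosmetically by adding and subtracting the gradient difference explicitly rather than rearranging the optimality inequality directly.
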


Note that in the above corollary we use $\|\cdot\|$ and $\|\cdot\|_*$ to denote the primal and dual norms, respectively. In practice, they may vary with $t$ and be chosen according to the corresponding proximal function $\psi_t$. The complete proof of this corollary is provided in Appendix~\ref{general OMD proof}. In the following, we will carefully analyze this regret bound and derive the specific form of matrix adaptivity by minimizing it.

\section{Deriving Row-wise and Column-wise Matrix AdaGrad}

In this section, building on the results established in the preceding two sections, we present the derivation of the proposed Row-wise and Column-wise Matrix AdaGrad algorithms.

To begin with, we specify concrete instantiations of the matrix proximal functions. In the vector AdaGrad setting, the standard formulation employs a diagonal metric matrix to define the proximal function, which is much more popular and practical than its full-matrix counterpart, as the latter incurs substantially higher memory and computational costs~\citep{duchi2011adaptive, gupta2018shampoo}. Accordingly, we restrict our derivation to the diagonal case, which enables efficient row- or column-wise adaptive scaling while avoiding the high costs associated with full-matrix variants. In the following, we take the row-wise formulation as the basis for our derivation, with the column-wise case following analogously. Formally, for any $X \in \mathbb{R}^{m \times n}$, the row-wise matrix proximal function at iteration $t$ is given by $\psi_t^{H_t}(X) = \frac{1}{2} \|X\|_{H_t}^2 = \frac{1}{2} \langle X, H_t X \rangle_F,$ where $H_t = \mathrm{Diag}(s_t) \succ 0 \in \mathbb{R}^{m \times m}$ is a positive definite diagonal matrix, and $s_t = (s_{t,1}, \ldots, s_{t,m})^\top \in \mathbb{R}_{++}^m$ is the corresponding row-wise scaling vector. In analogy with the monotonicity of the scaling factors over iterations in vector AdaGrad, we further assume that the metric sequence is non-decreasing, i.e., \(H_{t+1}\succeq H_t\), or, equivalently \(s_{t+1,i}\ge s_{t,i}\) for all \(i\).

\subsection{Regret Analysis with Row-wise Matrix Proximal Functions}

Having established the explicit form of the proximal function, we now proceed to analyze the regret bound in Corollary~\ref{OMD general bound} in greater depth to derive further insights.

The regret bound consists of three main terms. The first term is a ``constant'' with respect to $T$, depending merely on the initialization---$X_1$ and $\psi_1$. In contrast, the remaining two terms grow with $T$ and thus dominate the asymptotic behavior of the bound. Among them, the last term has a relatively simple form, involving the accumulated squared dual norms of the subgradients, while the middle term is more intricate, reflecting the cumulative effect of the varying proximal functions through changes in the Bregman matrix divergence.

We proceed by examining the second term to uncover its underlying structure. The Bregman divergence difference $B_{\psi_{t+1}}(U; X_{t+1}) - B_{\psi_t}(U; X_{t+1})$ arises from the temporal adaptation of the proximal function, transitioning from $\psi_t$ to $\psi_{t+1}$—that is, the change of the diagonal matrix from $H_t$ to $H_{t+1}$. If the proximal function were fixed, i.e., $\psi_{t+1} = \psi_t$, this difference would vanish. Hence, it quantifies the additional penalty imposed by the adapted proximal function $\psi_{t+1}$ relative to $\psi_t$. To facilitate subsequent analysis, we introduce the trace notation: for a square matrix \(A\), \(\operatorname{Tr}(A) = \sum_{i} A_{i,i}\) denotes the sum of its diagonal entries. The Frobenius inner product can thus be written as \(\langle A, B \rangle_F = \operatorname{Tr}(A^\top B)\), leading to \(\psi_t^{H_t}(X) = \tfrac{1}{2} \operatorname{Tr}(X^\top H_t X)\), whose gradient is \(\nabla \psi_t^{H_t}(X) = H_t X\). The Bregman matrix divergence induced by \(\psi_t^{H_t}\) can then be expressed in trace form as 
\(B_{\psi_t^{H_t}}(X; Y)
= \tfrac{1}{2} \operatorname{Tr}(X^\top H_t X)
- \tfrac{1}{2} \operatorname{Tr}(Y^\top H_t Y)
- \operatorname{Tr}\bigl(Y^\top H_t (X - Y)\bigr).\)
This expression further simplifies to the compact form \(B_{\psi_t^{H_t}}(X; Y)
= \tfrac{1}{2} \|X - Y\|_{H_t}^2
= \tfrac{1}{2} \operatorname{Tr}\!\big((X - Y)^\top H_t (X - Y)\big).\)\footnote{For completeness, the derivation is provided in Appendix~\ref{appendix:trace-simplification}.} Since $H_t$ is diagonal, the trace expression simplifies to $B_{\psi_t^{H_t}}(X; Y) = \frac{1}{2} \sum_{i=1}^m s_{t,i} \, \|X_{i,:} - Y_{i,:}\|_2^2$, which is a weighted sum over the squared row norms. Using the compact trace representation and its row-wise form, the Bregman divergence difference can be written as
\[
\begin{aligned}
B_{\psi_{t+1}^{H_{t+1}}}(X^*; X_{t+1})
- B_{\psi_t^{H_t}}(X^*; X_{t+1})
&= \tfrac{1}{2} \operatorname{Tr}\!\big((X^* - X_{t+1})^\top (H_{t+1} - H_t) (X^* - X_{t+1})\big) \\
&= \tfrac{1}{2} \sum_{i=1}^m (s_{t+1,i} - s_{t,i})\, \|X^*_{i,:} - X_{t+1,i,:}\|_2^2.
\end{aligned}
\]
Summing over $t = 1, \dots, T-1$ and interchanging sums yields
$$\sum_{t=1}^{T-1} \Big( B_{\psi_{t+1}^{H_{t+1}}}(X^*; X_{t+1}) - B_{\psi_t^{H_t}}(X^*; X_{t+1}) \Big) = \frac{1}{2} \sum_{i=1}^m \sum_{t=1}^{T-1} (s_{t+1,i} - s_{t,i}) \, \|X^*_{i,:} - X_{t+1,i,:}\|_2^2.$$
The squared deviation of each row $i$ can be bounded by the largest squared deviation among all rows in the matrix
\[
\|X^*_{i,:} - X_{t+1,i,:}\|_2^2
\le \,\max_{\mathclap{\scriptscriptstyle{1 \le j \le m}}}\, \|X^*_{j,:} - X_{t+1,j,:}\|_2^2
= \|X^* - X_{t+1}\|_{:2,\infty}^2,
\]
where \(\|X\|_{:2,\infty} = \,\max_{\scriptscriptstyle{1 \le j \le m}}\!\|X_{j,:}\|_2\) is the row-wise mixed $(2,\infty)$ norm, with the $\ell_2$ norm applied along each row and the $\ell_\infty$ norm across rows, i.e., it measures the largest Euclidean norm among the rows of \(X\). Taking the maximum over all iterations $u = 1, \dots, T$, we obtain the uniform bound
\[
\|X^*_{i,:} - X_{t+1,i,:}\|_2^2
\le \|X^* - X_{t+1}\|_{:2,\infty}^2
\le \,\max_{\mathclap{\scriptscriptstyle{1 \le u \le T}}}\, \|X^* - X_u\|_{:2,\infty}^2.
\]
Since $s_{t+1,i} - s_{t,i} \ge 0$, we can first factor out the global row-wise supremum and then, by applying the telescoping sum, reduce the sum $\sum_{t=1}^{T-1} (s_{t+1,i} - s_{t,i})$ to $s_{T,i} - s_{1,i}$, yielding
\[
\begin{aligned}
\sum_{t=1}^{T-1} \Bigl( B_{\psi_{t+1}^{H_{t+1}}}(X^*; X_{t+1}) - B_{\psi_t^{H_t}}(X^*; X_{t+1}) \Bigr)
&\le \frac{1}{2}\,\max_{\mathclap{\scriptscriptstyle{1 \le u \le T}}}\, \|X^* - X_u\|_{:2,\infty}^2 \sum_{i=1}^m \sum_{t=1}^{T-1} (s_{t+1,i} - s_{t,i}) \notag\\
&\le \frac{1}{2} \,\max_{\mathclap{\scriptscriptstyle{1 \le u \le T}}}\, \|X^* - X_u\|_{:2,\infty}^2 \Big(\sum_{i=1}^m s_{T,i} - \sum_{i=1}^m s_{1,i}\Big).
\end{aligned}
\]
The first divergence term can be also bounded using the global row-wise supremum, giving
\[
B_{\psi_1^{H_1}}(X^*; X_1)
= \frac{1}{2} \sum_{i=1}^m s_{1, i} \,\!\|X^*_{i,:} - X_{1,i,:}\|_2^2
\le \frac{1}{2} \,\max_{\mathclap{\scriptscriptstyle{1 \le u \le T}}}\,\|X^* - X_u\|_{:2,\infty}^2\! \sum_{i=1}^m s_{1, i}.
\]
Combining the two preceding bounds, and using the trace notation $\sum_{i=1}^m s_{T,i} = \operatorname{Tr}(H_T)$, we obtain
\[
R(T) 
\le \frac{1}{2\eta} \,\max_{\mathclap{\scriptscriptstyle{1 \le u \le T}}}\, \|X^* - X_u\|_{:2,\infty}^2 \! \operatorname{Tr}(H_T) 
+ \frac{\eta}{2\lambda} \sum_{t=1}^T \|G_t\|_*^2.
\]
The current form of the regret bound does not yet rely heavily on the specific instantiation of the matrix proximal function $\psi_t^{H_t}$, apart from assuming that $H_t$ is diagonal and monotone. To make the bound more concrete, several additional steps are required to ultimately determine the explicit form of $\psi_t^{H_t}$ (or equivalently, $H_t$). We proceed by analyzing the squared dual norm term. So far, the regret bound has been expressed in terms of a generic norm $\|\cdot\|$ without specifying its explicit form. To accurately reflect the geometry induced by the proximal function, we specify it as the corresponding metric-induced matrix norm, whose squared form is $\|X\|_{H_t}^2 = \operatorname{Tr}(X^\top H_t X)$, with $H_t \succ 0$ as defined above. The proximal function $\psi_t^{H_t}(X) = \frac{1}{2}\|X\|_{H_t}^2$ is $1$-strongly convex with respect to $\|\cdot\|_{H_t}$, and hence satisfies the \(\lambda\)-strong convexity assumption in Corollary~\ref{OMD general bound}. The associated squared dual norm is then given by $\|G_t\|_{H_t}^{*2} = \|G_t\|_{H_t^{-1}}^2 = \operatorname{Tr}(G_t^\top H_t^{-1} G_t)$. This follows from the standard properties of matrix-induced norms and their duals; for completeness, a detailed derivation is provided in Appendix~\ref{app:induced_norms}. Accordingly, the accumulated squared dual norms of the subgradients in the regret bound are $\sum_{t=1}^T \|G_t\|_{H_t}^{*2} = \sum_{t=1}^T \operatorname{Tr}(G_t^\top H_t^{-1} G_t)$. Since $\{H_t\}_{t=1}^T$ evolves over time, directly analyzing the cumulative term
$\sum_{t=1}^T \operatorname{Tr}(G_t^\top H_t^{-1} G_t)$ can be technically intricate. To simplify the analysis, we consider a surrogate in which each time-varying metric $H_t$ is replaced by the terminal metric $H_T$; that is, we analyze
$\sum_{t=1}^T \operatorname{Tr}(G_t^\top H_T^{-1} G_t).$ Specifically, we assume the following bound holds
\[
\sum_{t=1}^T \operatorname{Tr}(G_t^\top H_t^{-1} G_t) \le a \sum_{t=1}^T \operatorname{Tr}(G_t^\top H_T^{-1} G_t),
\]
where \(a > 1\) is an abstract constant. The bound is possible because, with an appropriate choice of $\{H_t\}_{t=1}^T$, the cumulative effect of the time-varying metrics can be controlled by a scalar multiple of the same sum evaluated with the terminal metric. This bounding treatment largely simplifies the analysis, allowing us to maintain the main structure of the problem without delving into excessive technical complexities. Later, we will justify this bound by showing that, once an appropriate concrete adaptive proximal function is specified, the provisional bound $a$ can be replaced by a very small constant multiplicative factor independent of $T$. Now, by substituting the generic norm with the induced matrix norm and incorporating the surrogate trace bound with the abstract factor $a$, the regret bound can be expressed in the following form
\[
R(T) 
\le \frac{1}{2\eta} \,\max_{\mathclap{\scriptscriptstyle{1 \le u \le T}}}\, \|X^* - X_u\|_{:2,\infty}^2\! \operatorname{Tr}(H_T)   + \frac{\eta a}{2\lambda} \sum_{t=1}^T \operatorname{Tr}(G_t^\top H_T^{-1} G_t).
\]

\subsection{Retrospectively Optimal Row-wise Proximal}

Inspecting the above regret bound, we observe two components with opposing dependence on $H_T$: one term increases with $H_T$, while the other decreases through $H_T^{-1}$. This suggests a clear trade-off in the choice of $H_T$ and indicates the existence of a retrospectively optimal proximal function. In the following, we make this trade-off more explicit by decomposing the bound into a row-wise separable form. First, by definition of $H_T$, its inverse is given by
\(H_T^{-1} = \operatorname{Diag}(s_T) ^{-1}= \operatorname{Diag}(1/s_T),\) where \(1/s_T = (1/s_{T,1}, \ldots, 1/s_{T,m})^\top \in \mathbb{R}_{++}^m.\) For each $t$, the trace term can be written in a row-wise form as $\operatorname{Tr}(G_t^\top H_T^{-1} G_t) = \sum_{i=1}^m \|G_{t,i,:}\|_2^2 / s_{T,i}$. Summing over all $t$ and interchanging sums yields
\[
\sum_{t=1}^T \operatorname{Tr}(G_t^\top H_T^{-1} G_t)
= \sum_{t=1}^T \sum_{i=1}^m \frac{\|G_{t,i,:}\|_2^2}{s_{T,i}}
= \sum_{i=1}^m \frac{1}{s_{T,i}} \sum_{t=1}^T \|G_{t,i,:}\|_2^2.
\]
By combining this expression with $\operatorname{Tr}(H_T) = \sum_{i=1}^m s_{T,i}$, the regret bound can be written in a fully row-wise separable form
\[
R(T)
\le
\frac{\Delta_T^2}{2\eta} \sum_{i=1}^m s_{T,i} 
+ \frac{\eta a}{2\lambda} \sum_{i=1}^m \frac{V_i}{s_{T,i}} = \sum_{i=1}^m \left(
\frac{\Delta_T^2}{2\eta} s_{T,i} 
+ \frac{\eta a}{2\lambda} \frac{V_i}{s_{T,i}}
\right),\]
where $\Delta_T = \max_{\scriptscriptstyle{1 \le u \le T}}\! \|X^* - X_u\|_{:2,\infty}$, 
$V_i = \sum_{t=1}^T \|G_{t,i,:}\|_2^2$. Through this form, the trade-off between the two terms in the regret bound becomes more explicit: increasing $s_{T,i}$ enlarges the first term while reducing the second, and vice versa. This opposing dependence on $s_{T,i}$ allows us to derive the hindsight-optimal proximal function by minimizing the regret bound. Since the bound is fully separable across rows, we can optimize each row independently to achieve this. Specifically, for the $i$-th row, we consider its row-wise regret function w.r.t. $s_{T,i}$
\[
R_i(s_{T,i}) = \frac{\Delta_T^2}{2\eta} s_{T,i} + \frac{\eta a}{2\lambda} \frac{V_i}{s_{T,i}}, \quad s_{T,i} > 0.
\]
Since both terms are positive for $s_{T,i} > 0$, we can find the minimum by applying the arithmetic–geometric mean inequality
\[
\frac{\Delta_T^2}{2\eta} s_{T,i} + \frac{\eta a}{2\lambda} \frac{V_i}{s_{T,i}} 
\ge 2 \sqrt{\frac{\Delta_T^2}{2\eta} \cdot \frac{\eta a V_i}{2\lambda}} 
= \frac{\Delta_T\sqrt{a}}{\sqrt{\lambda}} \sqrt{V_i}.
\]
The minimum $\Delta_T\sqrt{a/\lambda} \sqrt{V_i}$ is attained when equality holds, i.e.,
\[
\frac{\Delta_T^2}{2\eta} s_{T,i} = \frac{\eta a}{2\lambda} \frac{V_i}{s_{T,i}} \quad \Longrightarrow \quad
s_{T,i}^* = \frac{\eta\sqrt{a}}{\Delta_T\sqrt{\lambda}}\sqrt{V_i}, \,\, i.e.,\,
s_{T,i}^* \propto \sqrt{\sum_{t=1}^T \|G_{t,i,:}\|_2^2}.
\]
Here, $\propto$ indicates that $s_{T,i}^*$ is proportional to the square root
of the accumulated squared gradient norms of the $i$-th row, highlighting the core row-wise scaling relationship by omitting constant factors that are identical across all rows. Extending the same reasoning to all time steps, for each $t = 1, \dots, T$, we set \(s_{t,i}^* \propto \sqrt{\sum_{k=1}^t \|G_{k,i,:}\|_2^2},\) which represents the hindsight-optimal row-wise scaling up to time $t$. To avoid numerical instabilities caused by extremely small values of $s_{t,i}^*$—which may arise in the initial stages of optimization when the $i$-th rows of past gradients $G_1,\dots,G_t$ are zero or nearly zero—we add a small positive stabilizer $\delta>0$. This also ensures that the resulting row-wise scaling vector is positive definite. We now have a concrete instantiation of $s_t$, i.e., $H_t$, which yields a practical evolution scheme for the adaptive row-wise matrix proximal, thereby giving rise to the Row-wise Matrix AdaGrad algorithm presented in Algorithm~\ref{alg:row-adagrad}. By construction, the scaling factors are non-decreasing in $t$, thus satisfying the monotonicity assumption imposed at the beginning of our derivation. Also note that upon introducing $\delta$ into each $s_t$, both $\operatorname{Tr}(H_T)$ and $\operatorname{Tr}(G_t^\top H_T^{-1} G_t)$ in the preceding regret bound now contain terms involving $\delta$. We will show that, with careful treatment, we can obtain a clean regret bound that reflects only the contribution of the subgradients, completely free of $\delta$.
\begin{algorithm}[t]
\caption{Row-wise Matrix AdaGrad \textbf{(Row-AdaGrad)}}
\label{alg:row-adagrad}
\begin{algorithmic}[1]
\STATE \textbf{Input:}\, Convex set \({\cal X}\), time horizon \(T\), step size $\eta > 0$, stabilizer $\delta > 0$
\STATE \textbf{Initialize:} $X_1 \in \mathcal{X} \subseteq \mathbb{R}^{m \times n}$, \,$s_0^{\mathrm{r}} = \delta\mathbf{1} \in \mathbb{R}^m$
\FOR {$t=1,\dots,T$}
\STATE Predict $X_t$ and receive loss function $f_t:\mathcal X\rightarrow\mathbb R$
\STATE Compute subgradient $G_t \in \partial f_t(X_t)$ of $f_t$ at $X_t$
\STATE Update row-wise scaling vector
\vspace{-0.5mm}
\[s_{t,i}^{\mathrm{r}} = \sqrt{\textstyle\sum_{k=1}^t \|G_{k,i,:}\|_2^2} + \delta,\,\,\,\! i=1,\dots,m\qquad\qquad\qquad\]
\vspace{-1.9mm}
\STATE Set row-wise matrix proximal $\psi_t^{H_t}(X) = \frac{1}{2} \langle X, H_t X \rangle_F$,\hspace{0.5mm} with $H_t = \mathrm{Diag}(s_t^{\mathrm{r}})$ 
\STATE Matrix Mirror Descent Update
\vspace{-1.1mm}
\[
X_{t+1} = \mathop{\mathrm{argmin}}_{X \in \mathcal{X}} \Big\{{\eta}\hspace{0.16mm}\langle G_t, X \rangle_F + B_{\psi_t^{H_t}}(X; X_t) \Big\}\qquad\qquad\quad
\]
\vspace{-2.3mm}
\ENDFOR
\end{algorithmic}
\end{algorithm}

\begin{remark} \label{trade} Our derivation is best understood in comparison with the classical derivation of entry-wise AdaGrad~\citep{duchi2011adaptive}. Their derivation is primarily centered on reducing the gradient-dependent dual-norm term. Writing \(s\) for the (entry-wise) scaling vector---with a slight abuse of notation---and \(g\) for the gradient vector, their scaling rule is obtained by solving the following hindsight problem
\begin{equation*}
\min_{s}\ \sum_{t=1}^{T}\sum_{i=1}^{d}\frac{g_{t,i}^{2}}{s_i}
\quad\text{s.t.}\quad s\succ 0,\ \langle\mathbf{1},s\rangle\le c .
\end{equation*}
Here, the budget constraint \(\langle\mathbf{1},s\rangle\le c\) is introduced externally to ensure well-posedness, while its constant \(c\) is subsequently absorbed into the step size. As a result, the particular form of the scaling rule is not directly revealed by the regret bound itself. In contrast, our derivation, though different in form, makes explicit where the adaptive form arises. Introducing a time-varying proximal function induces an intrinsic trade-off: it systematically suppresses the gradient-dependent dual-norm term, while simultaneously incurring a penalty associated with the temporal variation of the proximal functions. By expressing this trade-off in a row-wise separable form and minimizing the resulting regret upper bound with respect to the scaling factors, we obtain the proposed adaptive matrix scaling rule naturally in closed form. Thus, the scaling rule is not imposed by an external hindsight objective, but emerges as a direct consequence of the intrinsic trade-off governing the regret.
\end{remark}

In the following, we first address $\operatorname{Tr}(G_t^\top H_T^{-1} G_t)$. Using the derived concrete form of $\{H_t\}_{t=1}^{T}$, we can now quantify the relationship between $\sum_{t=1}^T \operatorname{Tr}(G_t^\top H_t^{-1} G_t)$ and $\sum_{t=1}^T \operatorname{Tr}(G_t^\top H_T^{-1} G_t)$ more precisely, deriving a bound with a concrete constant rather than an abstract one, thereby justifying the use of the abstract bound in the preceding analysis. Furthermore, we eliminate the dependence on $\delta$ in $H_T$ by deriving a bound on $\sum_{t=1}^T \operatorname{Tr}(G_t^\top H_T^{-1} G_t)$ that depends solely on the subgradients. This result is formalized in the following corollary.
\begin{corollary}
\label{key lemma like lemma4 in AdaGrad}
Let $\{G_t\}_{t=1}^{T}$ and $\{H_t\}_{t=1}^{T}$ (i.e., $\{s_t^{\mathrm{r}}\}_{t=1}^{T}$) be defined as in Algorithm~\ref{alg:row-adagrad}, with $s_{t,i}^{\mathrm{r}}= \delta + \sqrt{\sum_{k=1}^{t}\|G_{k,i,:}\|_2^2}$ and $H_t=\mathrm{Diag}(s_t^{\mathrm{r}})$.
Then the following bound holds
\[
\sum_{t=1}^T \operatorname{Tr}\bigl(G_t^\top H_t^{-1} G_t\bigr)
\le 2 \sum_{t=1}^T \operatorname{Tr}\bigl(G_t^\top H_T^{-1} G_t\bigr)
\le 2 \sum_{i=1}^{m} \sqrt{\sum_{t=1}^{T} \|G_{t,i,:}\|_2^2}.
\]
\end{corollary}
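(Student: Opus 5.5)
The plan is to reduce everything to a scalar inequality per row, using that $H_t$ is diagonal. Fix a row $i$ and write $a_t = \|G_{t,i,:}\|_2^2 \ge 0$, $S_t = \sum_{k=1}^t a_k$, and $V_i = S_T$. Then $s^{\mathrm r}_{t,i} = \delta + \sqrt{S_t}$. As in the row-wise decomposition above,
\[
\operatorname{Tr}(G_t^\top H_t^{-1} G_t) = \sum_{i=1}^m \frac{a_t}{\delta+\sqrt{S_t}},
\qquad
\sum_{t=1}^T \operatorname{Tr}(G_t^\top H_T^{-1} G_t) = \sum_{i=1}^m \frac{S_T}{\delta+\sqrt{S_T}} .
\]
Both claimed inequalities then follow by summing over $i$ the two per-row statements
\[
\sum_{t=1}^T \frac{a_t}{\delta+\sqrt{S_t}} \le \frac{2S_T}{\delta+\sqrt{S_T}}
\qquad\text{and}\qquad
\frac{S_T}{\delta+\sqrt{S_T}} \le \sqrt{S_T}.
\]

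The second inequality is immediate. If $S_T>0$, it holds because $\delta+\sqrt{S_T}\ge \sqrt{S_T}$. If $S_T=0$, both sides vanish.

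The first inequality is the main step. I would prove it by induction on $T$, using the auxiliary function $\varphi(S) = 2S/(\delta+\sqrt S)$ on $S\ge 0$, with $\varphi(0)=0$.

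The key computation is its derivative:
\[
\varphi'(S) = \frac{2(\delta+\sqrt S) - \sqrt S}{(\delta+\sqrt S)^2} = \frac{2\delta+\sqrt S}{(\delta+\sqrt S)^2} \ge \frac{1}{\delta+\sqrt S}.
\]
The last step holds because $(2\delta+\sqrt S)\ge \delta+\sqrt S$.

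From this we get the one-step bound. Since $u\mapsto 1/(\delta+\sqrt u)$ is nonincreasing,
\[
\varphi(S_t)-\varphi(S_{t-1}) = \int_{S_{t-1}}^{S_t}\varphi'(u)\,du \ge \int_{S_{t-1}}^{S_t}\frac{du}{\delta+\sqrt u} \ge \frac{a_t}{\delta+\sqrt{S_t}}.
\]
Summing this from $t=1$ to $T$ with $S_0=0$ telescopes to $\sum_t a_t/(\delta+\sqrt{S_t}) \le \varphi(S_T)$, which is exactly the first inequality.

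One degenerate case needs a remark. When $a_t=0$ the step is trivial, so zero rows and zero increments are harmless. The constant $\delta>0$ keeps every denominator positive.

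I expect the main obstacle to be the first inequality in the presence of $\delta$. The classical AdaGrad lemma only gives $\sum_t a_t/\sqrt{S_t}\le 2\sqrt{S_T}$. That bound is weaker than what is required here, since the right-hand side must be the $\delta$-dependent quantity $2S_T/(\delta+\sqrt{S_T})$ rather than $2\sqrt{S_T}$.

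The derivative bound $\varphi'(S)\ge 1/(\delta+\sqrt S)$ is what resolves this. As a sanity check, in the continuous limit the claim reduces to
\[
2\sqrt V - 2\delta\ln\!\bigl(1+\sqrt V/\delta\bigr) \le \frac{2V}{\delta+\sqrt V},
\]
which is equivalent to $\ln(1+x)\ge x/(1+x)$ with $x=\sqrt V/\delta$. So the factor $2$ is the right constant.
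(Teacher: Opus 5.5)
Your proof is correct and follows the paper's argument. The paper telescopes the same per-row potential, $2(s_{t,i}^{\mathrm{r}}-\delta)^2/s_{t,i}^{\mathrm{r}} = 2S_t/(\delta+\sqrt{S_t}) = \varphi(S_t)$, from $s_{0,i}^{\mathrm{r}}=\delta$, and then uses $s_{T,i}^{\mathrm{r}}\ge\sqrt{S_T}$ for the second inequality; the only difference is that it verifies the one-step increment bound algebraically (a difference of squares plus monotonicity of $1-\delta/s$), whereas you use $\varphi'(S)\ge 1/(\delta+\sqrt{S})$ and an integral comparison.
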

\begin{proof}
By definition $s_{t,i}^{\mathrm{r}} = \delta + \sqrt{\sum_{k=1}^{t}\|G_{k,i,:}\|_2^2}$, so that \(\|G_{t,i,:}\|_2^2 = (s_{t,i}^{\mathrm{r}} - \delta)^2 - (s_{t-1,i}^{\mathrm{r}} - \delta)^2\). Hence, the row-wise expansion of each trace can be expressed as
\[
\operatorname{Tr}(G_t^\top H_t^{-1} G_t)
= \sum_{i=1}^{m} \frac{\|G_{t,i,:}\|_2^2}{s_{t,i}^{\mathrm{r}}}
= \sum_{i=1}^{m} \frac{(s_{t,i}^{\mathrm{r}} - \delta)^2 - (s_{t-1,i}^{\mathrm{r}} - \delta)^2}{s_{t,i}^{\mathrm{r}}}.
\]
For each \(i\), we claim the following termwise inequality
\[
\frac{(s_{t,i}^{\mathrm{r}}-\delta)^2-(s_{t-1,i}^{\mathrm{r}}-\delta)^2}{s_{t,i}^{\mathrm{r}}} \le 2\!\left(\frac{(s_{t,i}^{\mathrm{r}}-\delta)^2}{s_{t,i}^{\mathrm{r}}}-\frac{(s_{t-1,i}^{\mathrm{r}}-\delta)^2}{s_{t-1,i}^{\mathrm{r}}}\right).
\]
By applying the difference-of-squares decomposition to the left-hand side and using $s_{t-1,i}^{\mathrm{r}} \le s_{t,i}^{\mathrm{r}}$ to replace $s_{t-1,i}^{\mathrm{r}}$ with $s_{t,i}^{\mathrm{r}}$ in the second factor, we obtain
\[
\text{LHS} 
= \frac{(s_{t,i}^{\mathrm{r}} - s_{t-1,i}^{\mathrm{r}})(s_{t,i}^{\mathrm{r}} + s_{t-1,i}^{\mathrm{r}} - 2\delta)}{s_{t,i}^{\mathrm{r}}}
\le \frac{2 (s_{t,i}^{\mathrm{r}} - s_{t-1,i}^{\mathrm{r}})(s_{t,i}^{\mathrm{r}} - \delta)}{s_{t,i}^{\mathrm{r}}}.
\]
Re-expressing the first factor as $s_{t,i}^{\mathrm{r}} - s_{t-1,i}^{\mathrm{r}} = (s_{t,i}^{\mathrm{r}} - \delta) - (s_{t-1,i}^{\mathrm{r}} - \delta)$, we then have
\[
\text{LHS}
\le \frac{2 (s_{t,i}^{\mathrm{r}} - s_{t-1,i}^{\mathrm{r}})(s_{t,i}^{\mathrm{r}} - \delta)}{s_{t,i}^{\mathrm{r}}}
= \frac{2 (s_{t,i}^{\mathrm{r}} - \delta)^2}{s_{t,i}^{\mathrm{r}}} - \frac{2 (s_{t-1,i}^{\mathrm{r}} - \delta)(s_{t,i}^{\mathrm{r}} - \delta)}{s_{t,i}^{\mathrm{r}}}.
\]
Noting that \( \frac{s_{t,i}^{\mathrm{r}} - \delta}{s_{t,i}^{\mathrm{r}}} = 1 - \frac{\delta}{s_{t,i}^{\mathrm{r}}} \ge \frac{s_{t-1,i}^{\mathrm{r}} - \delta}{s_{t-1,i}^{\mathrm{r}}} = 1 - \frac{\delta}{s_{t-1,i}^{\mathrm{r}}} \) due to \( s_{t,i}^{\mathrm{r}} \ge s_{t-1,i}^{\mathrm{r}} \), it follows that
\[
\text{LHS}
\le \frac{2 (s_{t,i}^{\mathrm{r}} - \delta)^2}{s_{t,i}^{\mathrm{r}}} - \frac{2 (s_{t-1,i}^{\mathrm{r}} - \delta)(s_{t,i}^{\mathrm{r}} - \delta)}{s_{t,i}^{\mathrm{r}}}
\le 2 \left( \frac{(s_{t,i}^{\mathrm{r}} - \delta)^2}{s_{t,i}^{\mathrm{r}}} - \frac{(s_{t-1,i}^{\mathrm{r}} - \delta)^2}{s_{t-1,i}^{\mathrm{r}}} \right),
\]
proving the claimed termwise inequality. Summing over $t=1,\dots,T$ and telescoping, we obtain
\[
\sum_{t=1}^{T} \text{LHS} 
\le 2 \sum_{t=1}^{T} \left( \frac{(s_{t,i}^{\mathrm{r}} - \delta)^2}{s_{t,i}^{\mathrm{r}}} - \frac{(s_{t-1,i}^{\mathrm{r}} - \delta)^2}{s_{t-1,i}^{\mathrm{r}}} \right)
= 2 \left( \frac{(s_{T,i}^{\mathrm{r}} - \delta)^2}{s_{T,i}^{\mathrm{r}}} - \frac{(s_{0,i}^{\mathrm{r}} - \delta)^2}{s_{0,i}^{\mathrm{r}}} \right).
\]
Since $s_{0,i}^{\mathrm{r}} = \delta$, the last term vanishes, leaving only the first term evaluated at $t = T$. Summing over $i = 1, \dots, m$ then yields the first inequality in the corollary.

Finally, by reordering sums and using \(s_{T,i}^{\mathrm{r}} = \delta + \sqrt{\sum_{t=1}^{T}\|G_{t,i,:}\|_2^2} \ge \sqrt{\sum_{t=1}^{T} \|G_{t,i,:}\|_2^2}\), we obtain
\[
2\sum_{t=1}^{T}\operatorname{Tr}(G_t^\top H_T^{-1}G_t) = 
2\sum_{i=1}^{m} \frac{\sum_{t=1}^{T}\|G_{t,i,:}\|_2^2}{s_{T,i}^{\mathrm{r}}} \le 2\sum_{i=1}^{m} \sqrt{\sum_{t=1}^{T}\|G_{t,i,:}\|_2^2}.
\]
\end{proof}

\begin{remark} \label{remark}
Our bound extends the corresponding entry-wise result of AdaGrad; see Lemma 4 of \citet{duchi2011adaptive}. When the matrix degenerates to a vector, it recovers their bound after a further simplification. In particular, our analysis retains the intermediate scaling-dependent term
$\sum_{t=1}^T \operatorname{Tr}\bigl(G_t^\top H_T^{-1}G_t\bigr),$
before reducing it to the standard cumulative gradient-norm bound. Thus, our bound provides a finer form than the final bound in Lemma 4. Moreover, unlike their induction-based proof, our bound follows directly from the proximal and regret structure, leading to a more transparent and intuitive proof.
\end{remark}

Next, we turn to handling $\operatorname{Tr}(H_T) = \sum_{i=1}^m s_{T,i}^{\mathrm{r}} = m\delta + \sum_{i=1}^m\!\sqrt{\sum_{t=1}^{T}\|G_{t,i,:}\|_2^2}$. The inclusion of $\delta$ suggests the appearance of an additional term \(\max_{\scriptscriptstyle{{1 \le u \le T}}}\! \|X^* - X_u\|_{:2,\infty}^2 m\delta\) in the regret bound. Recall that $\operatorname{Tr}(H_T)$ arises when bounding
\(\sum_{t=1}^{T-1} \Big( B_{\psi_{t+1}^{H_{t+1}}}(X^*; X_{t+1}) 
- B_{\psi_t^{H_t}}(X^*; X_{t+1}) \Big) 
+ B_{\psi_1^{H_1}}(X^*; X_1)\) within the regret analysis. Upon closer inspection, we observe that the preceding derivation employs a slightly relaxed version of the regret bound that omits the negative term $-B_{\psi_T^{H_T}}(X^*; X_{T+1})$ (see Corollary~\ref{OMD general bound}). In fact, this negative terminal term can naturally offset the additional contribution introduced by the stabilizer $\delta$, provided that $\delta$ is chosen sufficiently small to satisfy
$\max_{\scriptscriptstyle{1 \le u \le T}}\! \| X^* - X_u \|_{:2,\infty}^2 \, m \delta \le 2 B_{\psi_T^{H_T}}(X^*; X_{T+1})$. By reinstating this negative term into the analysis, the contribution of the stabilizer can be fully absorbed without inflating the overall regret bound. For completeness, a detailed derivation of a sufficient condition on $\delta$ is provided in Appendix~\ref{stabilizer}.

Combining the above results and the inequality in Corollary~\ref{key lemma like lemma4 in AdaGrad} with the preceding regret bound, we can now obtain the following theorem.
\begin{theorem}
\label{regret thm}
Let the sequences $\{X_t\}_{t=1}^{T}$ and $\{G_t\}_{t=1}^{T}$ be generated by Algorithm~\ref{alg:row-adagrad}. Then the following regret bound holds
\[
R(T) 
\le \frac{1}{2\eta} \,\max_{\mathclap{\scriptscriptstyle{1 \le u \le T}}}\, \|X^* - X_u\|_{:2,\infty}^2\! \sum_{i=1}^{m} \sqrt{\sum_{t=1}^{T}\|G_{t,i,:}\|_2^2}   + \eta \sum_{i=1}^{m} \sqrt{\sum_{t=1}^{T}\|G_{t,i,:}\|_2^2}.
\]
Then let \(\mathcal{D}_{:2,\infty} = \sup_{X,Y \in \mathcal{X}} \| X - Y \|_{:2,\infty}\) denote the diameter of \(\mathcal{X}\) under the row-wise mixed $(2,\infty)$ norm and set $\eta = \mathcal{D}_{:2,\infty}/\sqrt{2}$, we have
\[
R(T) 
\le \sqrt{2}\hspace{0.3mm}\mathcal{D}_{:2,\infty}\!\sum_{i=1}^{m} \sqrt{\sum_{t=1}^{T}\|G_{t,i,:}\|_2^2}.
\]
\end{theorem}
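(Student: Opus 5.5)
We start from the sharper form of Corollary~\ref{OMD general bound}, keeping the negative terminal term, and instantiate it with the row-wise proximal functions $\psi_t^{H_t}$ of Algorithm~\ref{alg:row-adagrad}. Each $\psi_t^{H_t}$ is $1$-strongly convex with respect to $\|\cdot\|_{H_t}$, so we take $\lambda=1$. The gradient term becomes $\frac{\eta}{2}\sum_{t}\operatorname{Tr}(G_t^\top H_t^{-1}G_t)$.

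One point needs care. Algorithm~\ref{alg:row-adagrad} writes the update as $\eta\langle G_t,X\rangle_F + B_{\psi_t}(X;X_t)$. This is the OMD update with step size $\eta$ and proximal function $\psi_t$. The corollary's proof only uses the optimality condition at each step plus the three-point identity, so it applies with $\psi_t$ used at round $t$.

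\textbf{Divergence terms.} These are handled exactly as in the preceding subsection. The first divergence term plus the telescoped differences are bounded by $\frac{1}{2\eta}\Delta_T^2\operatorname{Tr}(H_T)$, where $\Delta_T=\max_u\|X^*-X_u\|_{:2,\infty}$. This step uses diagonality of $H_t$, the monotonicity $s_{t+1,i}\ge s_{t,i}$ (immediate from the definition), and the row-wise bound $\|X^*_{i,:}-X_{t+1,i,:}\|_2^2\le\Delta_T^2$. Next we split $\operatorname{Tr}(H_T)=m\delta+\sum_i\sqrt{\sum_t\|G_{t,i,:}\|_2^2}$. The extra piece $\frac{1}{2\eta}\Delta_T^2 m\delta$ is cancelled by the reinstated term $-\frac{1}{\eta}B_{\psi_T}(X^*;X_{T+1})$, under the smallness condition on $\delta$ stated before the theorem (Appendix~\ref{stabilizer}). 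This cancellation is the main obstacle. It needs $B_{\psi_T}(X^*;X_{T+1})\ge \frac12\Delta_T^2 m\delta$, which holds only for $\delta$ small enough relative to the final distance, and fails in degenerate cases such as $X_{T+1}=X^*$. I would state it as a hypothesis on $\delta$, as the paper does. Alternatively, one can note that the bound holds with an additive $\frac{m\delta}{2\eta}\Delta_T^2$ that vanishes as $\delta\to 0$.

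\textbf{Gradient term.} We apply Corollary~\ref{key lemma like lemma4 in AdaGrad}: $\sum_t\operatorname{Tr}(G_t^\top H_t^{-1}G_t)\le 2\sum_i\sqrt{\sum_t\|G_{t,i,:}\|_2^2}$. Multiplying by $\eta/2$ gives exactly $\eta\sum_i\sqrt{\cdot}$. Adding the two pieces yields the first displayed bound.

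\textbf{Second bound.} Every $X_u$ and $X^*$ lies in $\mathcal X$, so $\Delta_T\le\mathcal D_{:2,\infty}$. The bound then becomes $\bigl(\frac{\mathcal D^2}{2\eta}+\eta\bigr)\sum_i\sqrt{\cdot}$. Minimizing over $\eta$ by AM--GM gives $\eta=\mathcal D/\sqrt2$ and coefficient $2\sqrt{\mathcal D^2/2}=\sqrt2\,\mathcal D_{:2,\infty}$, which is the claimed bound.
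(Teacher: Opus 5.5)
Your proposal is correct and follows the paper's argument step by step: the sharper form of Corollary~\ref{OMD general bound} with $\lambda=1$, the diagonal, monotone telescoping of the divergence terms into $\frac{1}{2\eta}\Delta_T^2\operatorname{Tr}(H_T)$, absorption of the $m\delta$ piece by the reinstated terminal term $-\frac{1}{\eta}B_{\psi_T}(X^*;X_{T+1})$ under the condition of Appendix~\ref{stabilizer}, Corollary~\ref{key lemma like lemma4 in AdaGrad} for the gradient term, and finally $\eta=\mathcal{D}_{:2,\infty}/\sqrt{2}$. Your caveat is also well taken: the theorem as stated silently relies on that smallness condition on $\delta$, which can fail (e.g.\ when $X_{T+1}=X^*$) and is somewhat circular because the iterates themselves depend on $\delta$, so either stating it as an explicit hypothesis or keeping the additive $\frac{m\delta}{2\eta}\Delta_T^2$ term is the right way to make the result precise.
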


\begin{remark} \label{degeneration} Our analytical framework and the proposed matrix-aware algorithms constitute a natural extension of standard AdaGrad. Specifically, when the parameter matrix reduces to a single column, effectively degenerating into a vector, the accumulated squared $\ell_2$-norms of the row-wise gradients reduce to the accumulated squared coordinate-wise gradients. Consequently, the row-wise adaptive scaling in Row-AdaGrad reduces to the standard coordinate-wise scaling, thereby recovering the entry-wise AdaGrad update rule. Consistent with the algorithmic reduction, our derived regret bound reduces to that of entry-wise AdaGrad (see Corollary 1 and Corollary 6 in \citep{duchi2011adaptive}), with the row-wise mixed $(2,\infty)$ norm reducing to the $\ell_\infty$ norm. This establishes our formulation as a unified framework for matrix-valued optimization that encompasses standard entry-wise AdaGrad as a special case.
\end{remark}

\subsection{Row-AdaGrad Outperforms Entry-Wise AdaGrad: A Motivating Example}
\label{subsec:motivating}

We expect Row-AdaGrad to outperform entry-wise AdaGrad when the gradient
matrices are concentrated on a few rows (row-wise sparsity), or when the
entries within each row are strongly correlated (intra-row dependency).
We provide empirical evidence demonstrating the improved performance of Row-AdaGrad in Section~\ref{experiments}. Here we present an abstract example showing that when the gradient matrices are row-sparse,
the regret bound of Row-AdaGrad is strictly smaller than that of entry-wise
AdaGrad.\footnote{The benefits of exploiting intra-row dependency are more clearly illustrated empirically in Section~\ref{experiments}.}

Recall that when applied to matrices, entry-wise AdaGrad treats an $m\times n$
matrix as a vector in $\mathbb{R}^{mn}$ and applies independent per-coordinate
step sizes. Its regret bound takes the form
\[
R_{\mathrm{vec}}(T) \le \sqrt{2}\, \mathcal{D}_\infty
\sum_{i=1}^{m}\sum_{j=1}^{n} \sqrt{\sum_{t=1}^{T} G_{t,i,j}^2},
\]
where $\mathcal{D}_\infty = \sup_{X,Y\in\mathcal{X}}\|X-Y\|_\infty$ is the
diameter of $\mathcal{X}$ under the entry-wise $\ell_\infty$ norm.

We construct a scenario in which the data matrices are row-wise sparse, which leads to row-sparse gradients. We fix an integer $K>0$ and
set the horizon $T = Km$. For each row $i\in\{1,\dots,m\}$, define the
row-activated matrix $D^{(i)} = e_i \mathbf{1}_n^\top$, where $e_i$ is the
$i$-th standard basis vector in $\mathbb{R}^m$ and $\mathbf{1}_n$ is the
all-ones vector in $\mathbb{R}^n$, so that $D^{(i)}$ has ones in its $i$-th
row and zeros elsewhere. At each round $t = 1,\dots,T$, a single row is
activated cyclically: $i_t = ((t-1)\bmod m)+1$ and $D_t = D^{(i_t)}$. We consider the hinge loss
\[
f_t(X) = \max\big\{0,\; 1 - y_t \langle X, D_t\rangle_F\big\},
\]
where $y_t = (-1)^t$ is the label. We take the decision set to be the Frobenius norm ball
$\mathcal{X} = \{X \in \mathbb{R}^{m\times n} : \|X\|_F \le B\}$ with
radius $B < 1/\sqrt{n}$. Since $\|D_t\|_F = \sqrt{n}$, this choice ensures
that the margin is violated at every round, regardless of the player's
action: for all $t$ and all $X_t \in \mathcal{X}$,
\[
y_t \langle X_t, D_t\rangle_F
\;\le\; \big|\langle X_t, D_t\rangle_F\big|
\;\le\; \|X_t\|_F\|D_t\|_F
\;\le\; B\sqrt{n} \;<\; 1 .
\]
Consequently, the subgradient of $f_t$ at $X_t$ is
$G_t = -y_t D_t = (-1)^{t+1} D^{(i_t)}$, so that only the $i_t$-th row of
$G_t$ is nonzero, with all entries equal to $\pm 1$. By construction, each
row $i$ is activated exactly $K$ times, and a direct computation yields
\[
\sum_{i=1}^{m} \sum_{j=1}^{n} \sqrt{\sum_{t=1}^{T} G_{t,i,j}^2} = m n \sqrt{K},\quad\,\, \sum_{i=1}^{m} \sqrt{\sum_{t=1}^{T} \|G_{t,i,:}\|_2^2} 
= \sum_{i=1}^{m} \sqrt{\sum_{t=1}^{T} \sum_{j=1}^{n} G_{t,i,j}^2} 
= m \, \sqrt{n K}.
\]
Moreover, the two diameter measures are
$\mathcal{D}_\infty = \sup_{X,Y\in\mathcal{X}}\|X-Y\|_\infty = 2B$ and
$\mathcal{D}_{:2,\infty} = \sup_{X,Y\in\mathcal{X}}\|X-Y\|_{:2,\infty} = 2B$. Both suprema are attained, for example, by taking $X = BE_{11}$ and $Y = -BE_{11}$, where $E_{11}$ denotes the matrix whose only nonzero entry is $1$ at position $(1,1)$. Substituting the above quantities into the respective regret bounds gives
\[
R_{\mathrm{vec}}(T) \le 2\sqrt{2} B \cdot m n \sqrt{K}, \quad
R_{\mathrm{row}}(T) \le 2\sqrt{2} B \cdot m \, \sqrt{n K}.
\]
The ratio of these two regret upper bounds is \(\sqrt{n}\), showing that Row-AdaGrad performs better in this row-sparse setting by improving the regret bound by a factor of $\sqrt{n}$ compared to the entry-wise counterpart.

A closer examination of this construction reveals the mechanism underlying the regret-bound improvement. Entry-wise AdaGrad assigns an independent step size to each coordinate and consequently treats the entries within each row separately. In contrast, Row-AdaGrad aggregates the gradient information across the entries of each row, thereby adapting to the row-wise structure of the gradients. In the above construction, this reduces the per-row contribution to the cumulative gradient term from $n\sqrt{K}$ to $\sqrt{nK}$, resulting in a $\sqrt{n}$ improvement in the final regret bound. Remarkably, Row-AdaGrad achieves this strictly tighter bound while using significantly fewer adaptive scaling factors ($m$ instead of $mn$). This highlights the key insight: when optimizing over matrices, aligning the adaptivity with the inherent structural geometry of the gradients (here, row-wise sparsity) is more effective and more parameter-efficient than fine-grained but structurally blind entry-wise adaptation. Finally, note that for $n = 1$ the two algorithms coincide, consistent with the degeneracy discussed in Remark~\ref{degeneration}.

\subsection{Deriving Column-wise Matrix AdaGrad via Matrix Transposition}

To derive the \emph{column-wise} matrix AdaGrad and its regret bound, we do not need to re-derive all the results from scratch. Instead, we can leverage the intrinsic relationship between the row-wise and column-wise formulations. To this end, we first establish a one-to-one correspondence between the row-wise and column-wise matrix proximal functions via matrix transposition.

Recall that, for a positive definite diagonal matrix \(Q \in \mathbb{R}^{n \times n}\)
and any matrix \(X \in \mathbb{R}^{m \times n}\), 
the column-wise matrix proximal function is given by
\(\psi_Q^{\mathrm{col}}(X) = \frac{1}{2} \langle X, X Q \rangle_F.\) By the definition of the Frobenius inner product, for any two matrices \(A, B \in \mathbb{R}^{m \times n}\), we have \(
\langle A, B \rangle_F = \langle A^\top\!, B^\top \rangle_F.
\) Applying this property to \(\psi_Q^{\mathrm{col}}(X)\) and using the symmetry of \(Q\), we obtain
\[
\psi_Q^{\mathrm{col}}(X)
= \frac{1}{2} \langle X^\top\!,\,\! (X Q)^\top \rangle_F
= \frac{1}{2} \langle X^\top\!,\,\! Q X^\top \rangle_F.
\]
Noting that \(\tfrac{1}{2}\langle X^\top\!, Q X^\top \rangle_F = \psi_Q^{\mathrm{row}}(X^\top)\), it follows that \(\psi_Q^{\mathrm{col}}(X) = \psi_Q^{\mathrm{row}}(X^\top)\). This equality shows that the column-wise proximal function evaluated at \(X\) is exactly the row-wise proximal function evaluated at the transposed matrix \(X^\top\)\!, thus establishing a one-to-one correspondence between \(\psi_Q^{\mathrm{col}}(X)\) and \(\psi_Q^{\mathrm{row}}(X^\top)\) via matrix transposition. This correspondence reflects the intrinsic symmetry of the proximal geometry under transposition: transposing the decision matrix swaps its rows and columns, thereby converting row-wise geometry into column-wise geometry, or vice versa. Consequently, the column-wise matrix AdaGrad algorithm and its associated regret bound can be obtained directly from their row-wise counterparts through this transpositional equivalence.

To more precisely formalize this transpositional equivalence, we denote \(X' = X^\top \in \mathbb{R}^{n \times m}\) and, for any convex loss function \(f: \mathbb{R}^{m \times n} \to \mathbb{R}\), define its transposed counterpart \(f': \mathbb{R}^{n \times m} \to \mathbb{R}\) by \(f'(X') = f(X'^\top) = f(X)\). By construction, $f'$ is convex with respect to $X'$, and its subgradients are related to those of $f$ as follows: if $G \in \partial f(X)$, then $G' = G^\top \in \partial f'(X')$, and conversely, if $G' \in \partial f'(X')$, then $G = (G')^\top \in \partial f(X)$. Recall that we have established the equality \(\psi_Q^{\mathrm{col}}(X) = \psi_Q^{\mathrm{row}}(X').\) Moreover, since both proximal functions are differentiable, their gradients satisfy \(\nabla\psi_Q^{\mathrm{row}}(X') = (\nabla\psi_Q^{\mathrm{col}}(X))^\top.\)
These facts imply that the associated Bregman matrix divergences satisfy \(B_{\psi_Q^{\mathrm{col}}}(X; Y) = B_{\psi_Q^{\mathrm{row}}}(X'; Y')\). Hence, the column-wise mirror descent update \(X_{t+1} = \mathop{\mathrm{argmin}}_{X \in \mathcal{X}} \big\{ \eta \langle G, X \rangle_F + B_{\psi_Q^{\mathrm{col}}}(X; X_{t}) \big\}\) is equivalent, up to transposition, to the row-wise update on the transposed variable \(X'_{t+1} = \mathop{\mathrm{argmin}}_{X' \in \mathcal{X}'} \big\{ \eta \langle G', X' \rangle_F + B_{\psi_Q^{\mathrm{row}}}(X'; X'_{t}) \big\}\), where \(\mathcal{X}' = \{ X' \mid X' = X^\top\!,\, X \in \mathcal{X} \}\); that is, performing the row-wise update on \(X'\) and then transposing the result back yields exactly the same iterate as the column-wise update on \(X\).

The correspondences established above allow us to derive the column-wise matrix AdaGrad and its regret bound by applying the row-wise counterparts to the transposed variable and then transposing back. Central to this derivation is the design of the adaptive column-wise matrix proximal update rule. Specifically, when applying Row-AdaGrad to $X' = X^\top \in \mathbb{R}^{n \times m}$ (with $G'_t = G_t^\top$), the row-wise scaling vector and matrix are updated as
\[
s_{t,j}^{\mathrm{row}} \leftarrow \sqrt{\textstyle\sum_{k=1}^t \|G'_{k,j,:}\|_2^2} + \delta, \,\, j = 1, \dots, n, \quad\!
H_t = \mathrm{Diag}(s_t^{\mathrm{row}}) \in \mathbb{R}^{n \times n}.
\]
Noting that each row of $G'_t$ corresponds to a column of $G_t$, i.e., $G'_{t,j,:} = G_{t,:,j}$, letting $s_{t,j}^{\mathrm{col}} = s_{t,j}^{\mathrm{row}}$, $Q_t = H_t$, and transposing back immediately yields the column-wise scaling vector and matrix update rule
\[
s_{t,j}^{\mathrm{col}} \leftarrow \sqrt{\textstyle\sum_{k=1}^t \|G_{k,:,j}\|_2^2} + \delta, \quad j = 1, \dots, n, \quad
Q_t = \mathrm{Diag}(s_t^{\mathrm{col}}) \in \mathbb{R}^{n \times n},
\]
which computes the scaling by accumulating the squared column-wise gradient norms rather than the row-wise ones. The resulting Column-wise Matrix AdaGrad (Column-AdaGrad) algorithm is presented in Algorithm~\ref{alg:column-adagrad}.

\begin{algorithm}[t]
\caption{Column-wise Matrix AdaGrad (Column-AdaGrad)}
\label{alg:column-adagrad}
\begin{algorithmic}[1]
\STATE \textbf{Input:}\, Convex set \({\cal X}\), time horizon \(T\), step size $\eta > 0$, stabilizer $\delta > 0$
\STATE \textbf{Initialize:} $X_1 \in \mathcal{X} \subseteq \mathbb{R}^{m \times n}$, \,$s_0^{\mathrm{c}} = \delta\mathbf{1} \in \mathbb{R}^n$
\FOR {$t=1,\dots,T$}
\STATE Predict $X_t$ and receive loss function $f_t:\mathcal X\rightarrow\mathbb R$
\STATE Compute subgradient $G_t \in \partial f_t(X_t)$ of $f_t$ at $X_t$
\STATE Update column-wise scaling vector
\vspace{-0.5mm}
\[s_{t,j}^{\mathrm{c}} = \sqrt{\textstyle\sum_{k=1}^t \|G_{k,:,j}\|_2^2} + \delta,\,\,\,\! j=1,\dots,n\qquad\qquad\qquad\]
\vspace{-1.9mm}
\STATE Set column-wise matrix proximal $\psi_t^{Q_t}(X) = \frac{1}{2} \langle X,  X Q_t \rangle_F$,\hspace{0.5mm} with $Q_t = \mathrm{Diag}(s_t^{\mathrm{c}})$ 
\STATE Matrix Mirror Descent Update
\vspace{-1.1mm}
\[
X_{t+1} = \mathop{\mathrm{argmin}}_{X \in \mathcal{X}} \Big\{{\eta}\hspace{0.16mm}\langle G_t, X \rangle_F + B_{\psi_t^{Q_t}}(X; X_t) \Big\}\qquad\qquad\quad
\]
\vspace{-2.3mm}
\ENDFOR
\end{algorithmic}
\end{algorithm}

Next, we derive the regret bound for Column-AdaGrad. By definition, its corresponding regret is given as
\(R(T) = \sum_{t=1}^T f_t(X_t) - \min_{X^\star \in \mathcal{X}} \sum_{t=1}^T f_t(X^\star).\) By the transpositional equivalence, we have \(\sum_{t=1}^T f_t(X_t) = \sum_{t=1}^T f_t'(X_t'),\,\,
\min_{X^\star \in \mathcal{X}} \sum_{t=1}^T f_t(X^\star) = \min_{X^{\prime \star} \in \mathcal{X}'} \sum_{t=1}^T f_t'(X^{\prime \star}).\) Therefore, the regret of Column-AdaGrad can be equivalently written as
\[
R(T) = \sum_{t=1}^T f_t(X_t) - \min_{X^\star \in \mathcal{X}} \sum_{t=1}^T f_t(X^\star) 
= \sum_{t=1}^T f_t'(X_t') - \min_{X^{\prime \star} \in \mathcal{X}'} \sum_{t=1}^T f_t'(X^{\prime \star}).
\]
In other words, the regret of Column-AdaGrad on the original problem is exactly the regret of Row-AdaGrad on the transposed problem. Consequently, by applying the known Row-AdaGrad regret bound to the transposed problem, we have
\[
R(T) 
\le \frac{1}{2\eta} \,\max_{\mathclap{\scriptscriptstyle{1 \le u \le T}}}\, \|X^{\prime\star} - X_u'\|_{:2,\infty}^2 \sum_{j=1}^{n} \sqrt{\sum_{t=1}^{T}\|G_{t,j,:}'\|_2^2} 
+ \eta \sum_{j=1}^{n} \sqrt{\sum_{t=1}^{T}\|G_{t,j,:}'\|_2^2}.
\]
Noting that the row-wise mixed $(2,\infty)$ norm of $X'$ is exactly the column-wise mixed $(2,\infty)$ norm of $X$, and that the $j$-th row of $G_t'$ corresponds to the $j$-th column of $G_t$, transposing back to column notation immediately yields the regret bound for Column-AdaGrad.
\begin{theorem}
\label{regret-col}
Let the sequences $\{X_t\}_{t=1}^{T}$ and $\{G_t\}_{t=1}^{T}$ be generated by Algorithm~\ref{alg:column-adagrad}. Then the following regret bound holds
\[
R(T) 
\le \frac{1}{2\eta} \,\max_{\mathclap{\scriptscriptstyle{1 \le u \le T}}}\, \|X^* - X_u\|_{2:,\infty}^2\! \sum_{j=1}^{n} \sqrt{\sum_{t=1}^{T}\|G_{t,:,j}\|_2^2}   + \eta \sum_{j=1}^{n} \sqrt{\sum_{t=1}^{T}\|G_{t,:,j}\|_2^2}.
\]
Then let \(\mathcal{D}_{2:,\infty} = \sup_{X,Y \in \mathcal{X}} \| X - Y \|_{2:,\infty}\) denote the diameter of \(\mathcal{X}\) under the column-wise mixed $(2, \infty)$ norm and set $\eta = \mathcal{D}_{2:,\infty}/\sqrt{2}$, we have
\[
R(T) 
\le \sqrt{2}\hspace{0.3mm}\mathcal{D}_{2:,\infty}\!\sum_{j=1}^{n} \sqrt{\sum_{t=1}^{T}\|G_{t,:,j}\|_2^2}.
\]
\end{theorem}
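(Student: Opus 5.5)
The plan is to reduce Theorem~\ref{regret-col} entirely to Theorem~\ref{regret thm} via the transpositional equivalence set up just before the statement, rather than re-deriving the regret analysis. I will define $X'_t = X_t^\top$, $G'_t = G_t^\top$, $f'_t(X') = f_t(X'^\top)$, and $\mathcal{X}' = \{X^\top : X \in \mathcal{X}\}$.

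\emph{Step 1: $\mathcal{X}'$ and $f'_t$ satisfy the row-wise hypotheses.} $\mathcal{X}'$ is the image of $\mathcal{X}$ under a linear isometry of the Frobenius space, so it is a convex compact set with nonempty interior. Each $f'_t$ is convex, and $G'_t \in \partial f'_t(X'_t)$.

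\emph{Step 2: the iterates correspond.} I will check by induction on $t$ that the iterates of Algorithm~\ref{alg:column-adagrad} on $(\mathcal{X}, f_t)$ transpose exactly to the iterates of Algorithm~\ref{alg:row-adagrad} on $(\mathcal{X}', f'_t)$.
\begin{itemize}
\item Since $G'_{k,j,:} = G_{k,:,j}$, the row scaling vector of the transposed run equals $s^{\mathrm{c}}_t$, and so $H_t = Q_t$.
\item The identity $B_{\psi_{Q_t}^{\mathrm{col}}}(X;X_t) = B_{\psi_{Q_t}^{\mathrm{row}}}(X^\top;X_t^\top)$ holds.
\item The linear term is preserved: $\langle G_t, X\rangle_F = \langle G'_t, X^\top\rangle_F$.
\end{itemize}
Hence the objective of the column update at $X$ equals the objective of the row update at $X^\top$. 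The map $X\mapsto X^\top$ is a bijection $\mathcal{X}\to\mathcal{X}'$, and both objectives are strictly convex with unique minimizers, so the two argmins correspond under transposition.

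\emph{Step 3: the regrets coincide.} We have $f_t(X_t) = f'_t(X'_t)$, and $\min_{\mathcal{X}}\sum_t f_t = \min_{\mathcal{X}'}\sum_t f'_t$, attained at $X'^\star = X^{\star\top}$. So the regret of Column-AdaGrad equals the regret of Row-AdaGrad on the transposed instance.

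\emph{Step 4: apply Theorem~\ref{regret thm} and translate back.} Applying Theorem~\ref{regret thm} to the transposed instance gives the displayed bound with primes. I then translate each primed quantity:
\begin{itemize}
\item $\|X'^\star - X'_u\|_{:2,\infty}$ is the largest Euclidean norm among the rows of $(X^\star - X_u)^\top$, i.e. the largest column norm of $X^\star - X_u$. This is $\|X^\star - X_u\|_{2:,\infty}$.
\item $\|G'_{t,j,:}\|_2 = \|G_{t,:,j}\|_2$, which gives the column sums in the statement.
\item For the tuned bound, the row diameter of $\mathcal{X}'$ equals $\mathcal{D}_{2:,\infty}$ of $\mathcal{X}$. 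So the choice $\eta = \mathcal{D}_{2:,\infty}/\sqrt{2}$ is exactly the row-wise choice, and it yields $\sqrt{2}\,\mathcal{D}_{2:,\infty}\sum_j\sqrt{\sum_t\|G_{t,:,j}\|_2^2}$.
\end{itemize}
As a check, the tuned bound follows from the first bound together with $\max_u\|X^\star-X_u\|^2_{2:,\infty}\le\mathcal{D}_{2:,\infty}^2$. With this $\eta$, the two terms are $\frac{\mathcal{D}}{\sqrt2}\,S$ and $\frac{\mathcal{D}}{\sqrt2}\,S$, where $S$ is the gradient sum, and they add to $\sqrt2\,\mathcal{D}\,S$.

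The main obstacle is Step 2, specifically confirming that Theorem~\ref{regret thm}'s hypotheses carry over to $\mathcal{X}'$. This matters for the stabilizer condition from Appendix~\ref{stabilizer}: it must be phrased in the transposed row-wise quantities, namely $\max_u\|X'^\star-X'_u\|^2_{:2,\infty}\, n\delta \le 2B_{\psi_T}(X'^\star;X'_{T+1})$. Under transposition this becomes the analogous column-wise condition, with $n$ in place of $m$. I will state this correspondence explicitly so that the $\delta$-free bound transfers verbatim.
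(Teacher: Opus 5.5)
Your proposal is correct and follows the paper's own argument. You transpose the instance and identify the Column-AdaGrad iterates, scalings ($H_t = Q_t$), Bregman divergences and regret with those of Row-AdaGrad on the transposed problem. You then apply Theorem~\ref{regret thm} and translate the row-wise mixed norm, diameter and gradient sums back to their column-wise counterparts. Your extra checks make explicit details the paper leaves implicit: uniqueness of the argmin via strict convexity, preservation of convexity, compactness and interior under transposition, and the stabilizer-absorption condition of Appendix~\ref{stabilizer} restated with $n$ in place of $m$.
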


Note that, similar to the Row-AdaGrad case, we can also construct an abstract example with sparse column gradients, in which Column-AdaGrad improves the regret bound by a factor of \(\sqrt{m}\) over entry-wise AdaGrad, demonstrating the benefits of exploiting matrix structure. The result can be obtained from the transpositional equivalence. However, a direct derivation in column notation is very short and intuitive, so we provide it in Appendix~\ref{col-example} without invoking this equivalence.

\section{Experiments}
\label{experiments}

To evaluate the benefits of exploiting matrix structure for adaptive optimization, we compare matrix-aware adaptive scaling methods with standard entry-wise adaptive methods across several tasks involving matrix-valued parameters. Our experiments consider the proposed Row-AdaGrad and Column-AdaGrad, as well as variants that combine row- or column-wise adaptive scaling with momentum, enabling a more direct comparison with Adam.

\subsection{Matrix Factorization}

We first consider matrix factorization as a representative task involving
matrix-valued parameters. Specifically, we use the MovieLens 100K data set~\citep{harper2015movielens}, which contains 100,000 ratings from 943 users for 1,682 movies. Each user and each movie is represented by a learnable embedding vector. By stacking these embeddings, the user and item parameters naturally form two matrices. Moreover, the embeddings within these matrices exhibit strong intra-dependence: the entries of each embedding vector jointly represent the latent characteristics of a user or an item. This structure provides a simple yet representative testbed for evaluating whether exploiting row-wise or column-wise structure can improve optimization.

\begin{figure}
    \centering
    \includegraphics[width=0.97\linewidth]{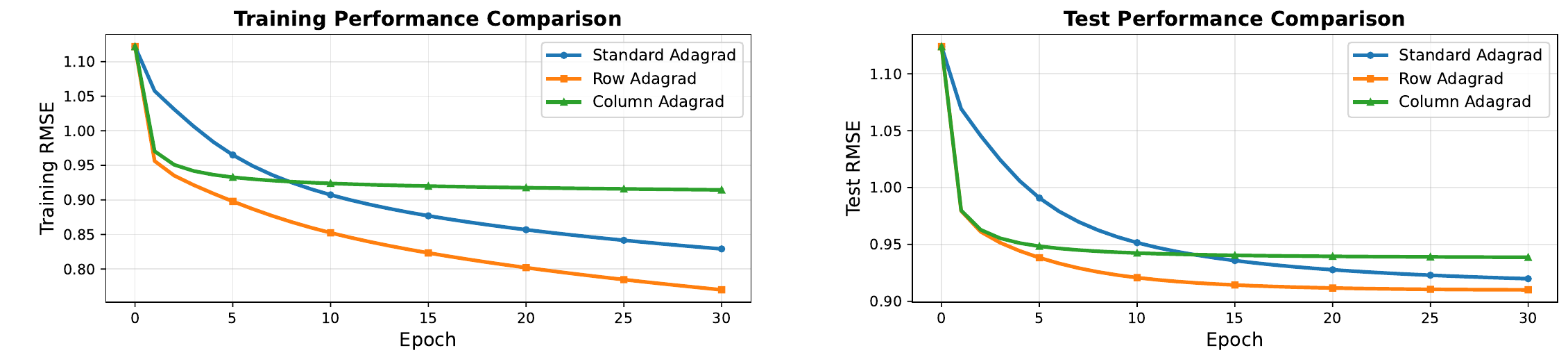}
    \caption{MovieLens 100K results with row-wise organization of user and item embeddings. Row-AdaGrad significantly outperforms entry-wise AdaGrad, while Column-AdaGrad performs the worst when its column-wise structure is misaligned with the parameter organization.}
    \label{fig1}
\end{figure}

We consider the following regularized matrix-factorization objective
\[
\min_{U,V}
\sum_{(i,j)\in\Omega}
\bigl(R_{ij}-u_i^\top v_j\bigr)^2
+\frac{\lambda_U}{2}\lVert U\rVert_F^2
+\frac{\lambda_V}{2}\lVert V\rVert_F^2,
\]
where $\Omega$ denotes the set of observed user-item pairs,
$U\in\mathbb{R}^{943\times k}$ and
$V\in\mathbb{R}^{1682\times k}$ are the user and item embedding matrices,
respectively, and $u_i$ and $v_j$ denote their corresponding embedding
vectors. We set the embedding dimension to $k=20$ and the regularization
coefficients to $\lambda_U=\lambda_V=0.02$. For each optimizer, we perform a grid search over a predefined set of learning rates and select the learning rate that achieves the best cross-validation performance. All methods use the same initialization and are trained for 30 epochs. The evaluation metric is Root Mean Squared Error (RMSE).

In this experiment, both the user and item embeddings are organized row-wise, so that each embedding vector corresponds to one row of the respective embedding matrix. As shown in Figure~\ref{fig1}, Row-AdaGrad
substantially outperforms standard entry-wise AdaGrad, demonstrating the benefit of adapting the scaling at the level of entire embedding vectors rather than individual parameters. In contrast, Column-AdaGrad performs the worst among the three methods, as its column-wise geometry is misaligned with the row-wise parameter organization. The performance gap between Row-AdaGrad and Column-AdaGrad highlights the importance of aligning the adaptive geometry with the underlying parameter organization. Importantly, this advantage stems from the alignment rather than the row-wise direction itself: if the same embeddings were organized column-wise, the relative performance of Row-AdaGrad and Column-AdaGrad would simply be reversed. An additional advantage of the proposed row- and column-wise scaling schemes is their substantially lower optimizer-state memory. Specifically, for an $m \times n$ parameter matrix, Row-AdaGrad and Column-AdaGrad maintain only a scaling vector in $\mathbb{R}^m$ or $\mathbb{R}^n$, respectively, requiring $O(m)$ or $O(n)$ optimizer state. In contrast, entry-wise AdaGrad maintains independent scaling factors for all $mn$ parameters, requiring $O(mn)$ optimizer state.

\subsection{Variable-Depth Stacked MLP Training}

To further investigate the effect of row-wise adaptive scaling on training stability\footnote{Column-wise scaling yields similar results; we report here only row-wise results for brevity.}, we consider a deliberately challenging setting that completely omits normalization and residual connections, i.e., without using common architectural techniques that can facilitate the optimization of deep neural networks~\citep{ba2016layer,he2016deep}. We simply stack MLP layers to increase the model depth, while keeping the learning rate and momentum coefficients fixed across optimizers. In particular, we use a relatively large learning rate, under which the choice of adaptive scaling becomes critical for stable optimization. We compare Row-wise Momentum, a momentum-based variant of Row-AdaGrad that replaces its cumulative gradient statistics with an exponential moving average (EMA), against Adam's entry-wise adaptive scaling under otherwise identical settings. For both optimizers, we set the learning rate to \(0.01\) and the momentum coefficients to \(\beta_1=\beta_2=0.9\). The dataset consists of \(N=640\) synthetic samples, with both the input \(X\in\mathbb{R}^{640\times20}\) and target \(Y\in\mathbb{R}^{640\times5}\) independently sampled from standard Gaussian distributions, i.e., \(X_{ij},Y_{ij}\sim\mathcal{N}(0,1)\). All hidden layers have width $20$, and the network depth is varied to study the effect of depth on training stability.

\begin{figure}
    \centering
    \includegraphics[width=0.97\linewidth]{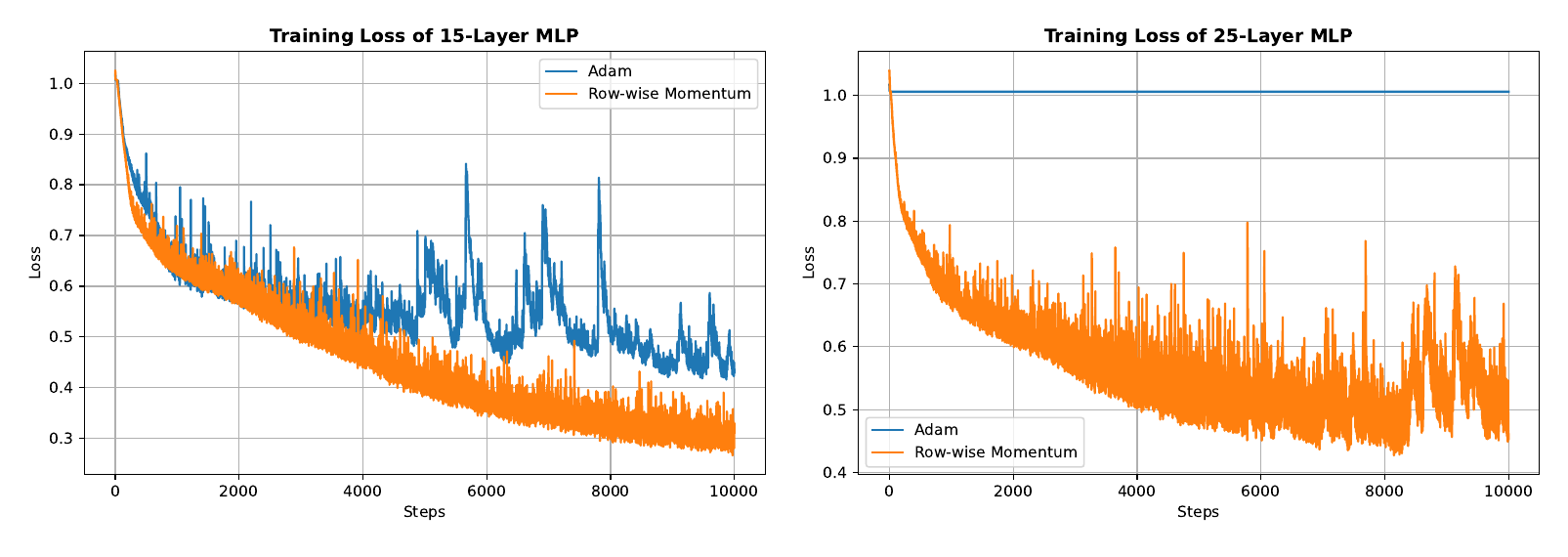}
    \caption{Training loss of Adam and Row-wise Momentum on stacked MLPs of different depths with \(lr=0.01\). Entry-wise scaling becomes highly unstable at $15$ layers and fails to train at $25$ layers, while row-wise scaling remains trainable.}
    \label{fig2}
\end{figure}

The results in Figure~\ref{fig2} show a substantial difference in
trainability between the two scaling schemes. Row-wise scaling remains
trainable as the network depth increases, supporting training at $25$ layers
despite increased loss fluctuations. In contrast, entry-wise scaling is
highly unstable at $15$ layers, exhibiting pronounced oscillations in the
training loss, and becomes completely untrainable at $25$ layers (with training already failing at $22$ layers). These results suggest that row-wise adaptive scaling leads to fundamentally different optimization dynamics and trainability from entry-wise scaling, rather than merely providing a different parameter-wise learning-rate schedule. In particular, row-wise scaling enables stable optimization at larger learning rates and supports the training of deeper networks, where entry-wise scaling becomes increasingly unstable and eventually fails to train. We further tested Adam with a smaller learning rate of \(0.001\), but it still completely fails to train at $25$ layers, indicating that the observed failure is not simply due to an overly large learning rate. More details of this experiment are provided in Appendix~\ref{small_lr}.

\section{Related Work}
\label{related}

The idea of adapting learning rates based on observed gradients dates back
to early work on adaptive and self-confident online learning~\citep{auer2002adaptive},
and was subsequently popularized by AdaGrad~\citep{duchi2011adaptive, McMahanS10}. A
central insight underlying AdaGrad is that adaptive scaling can be derived
from an online learning perspective through data-dependent regularization,
or equivalently, time-varying proximal functions in the framework of Online Mirror
Descent~\citep{orabona2019modern,jiang2023multi}. A comprehensive treatment of adaptive online learning and its connections to data-dependent regularization is provided by
McMahan~\citep{mcmahan2017survey}. Subsequent methods, including Adam and
AdamW~\citep{kingma2015adam,loshchilov2019decoupled}, have further established
adaptive gradient methods as de facto standards for optimizing modern deep neural networks. Our work builds on this adaptive proximal perspective and
extends classical entry-wise AdaGrad to matrix-valued parameters, with a focus on developing matrix-specific proximal geometries and their resulting adaptive scaling rules.

Beyond entry-wise adaptive scaling, various forms of structured adaptation have been explored. SM3~\citep{anil2019memory} shares second-moment statistics over a cover of tensor coordinates, while Blockwise Adaptive Gradient Descent~\citep{zheng2019blockwise} assigns a common adaptive scale to predefined parameter blocks. Adam-mini~\citep{zhang2025adam} similarly partitions parameters into blocks based on a block-diagonal approximation. Adafactor~\citep{shazeer2018adafactor} uses a factored representation of second-moment information to obtain a memory-efficient approximation to entry-wise adaptive scaling. These methods share two characteristics that distinguish them from our work. First, their motivation is largely efficiency-driven: they aim to reduce the memory overhead of maintaining full entry-wise second-moment statistics by sharing or factorizing adaptive quantities across parameters. Second, and more fundamentally, these methods operate on vector parameters rather than matrix-valued ones, while the matrix structure itself plays no explicit role in their design.

A distinct line of work explicitly incorporates the structure of matrix-valued parameters into adaptive optimization. A representative example is Shampoo~\citep{gupta2018shampoo}, which exploits matrix structure through Kronecker-factored preconditioning and can be viewed as an efficient approximation to the full-matrix AdaGrad geometry. More recently, one-sided variants of Shampoo have been explored~\citep{pmlr-v267-xie25j,an2026asgo}. NorMuon~\citep{li2025normuon}, building on the matrix orthogonalization framework of Muon, further introduces row-wise adaptive scaling to enhance orthogonalized updates. Our work is closely related to these approaches, but takes a different perspective by focusing on the underlying proximal geometry rather than directly specifying a structured preconditioner. Within a general online mirror descent framework, we define row-wise and column-wise proximal functions and derive the corresponding adaptive scaling rules from their proximal geometries through regret minimization. This formulation provides a more direct and interpretable account of the source of adaptivity, while also revealing the natural transpose symmetry between the row-wise and column-wise formulations. In contrast to one-sided Shampoo, which applies matrix preconditioning along only one mode, our framework accommodates both row-wise and column-wise choices. Moreover, whereas one-sided Shampoo retains a full matrix on the selected side and requires an explicit matrix-power computation, our diagonal structure yields row- or column-separable proximal geometries, eliminating the need for any matrix-power computation and substantially reducing both computational and memory costs. We derive matrix-specific regret guarantees and show through structured-gradient examples that matrix-level aggregation can yield strictly tighter bounds than entry-wise AdaGrad. Systematic experiments further demonstrate that matrix-aware adaptivity can improve training stability.

\section{Conclusion}
\label{final}

In this work, we develop a principled framework for deriving matrix-aware adaptive scaling from the proximal geometry of matrix-valued parameters. Within the Online Mirror Descent framework, we formulate row-wise and column-wise matrix proximal functions and show that the corresponding adaptive scaling rules naturally emerge through regret minimization. This perspective provides a more fundamental account of the source of adaptivity and yields matrix-specific regret guarantees, including strictly tighter bounds than entry-wise AdaGrad in structured-gradient settings. Our theoretical analysis is complemented by systematic experiments demonstrating that matrix-aware adaptivity can improve training stability. More broadly, our framework suggests a general approach to designing matrix-aware adaptive optimization methods: rather than specifying structured preconditioners directly, one can first design an appropriate matrix proximal function and then derive the corresponding adaptive scaling through regret minimization. This opens several directions for future work, particularly the systematic design of new matrix proximal functions that capture richer structures beyond the row-wise and column-wise geometries considered here. Such extensions could provide new matrix-aware adaptive methods with different structural inductive biases and theoretical guarantees. A more detailed comparison between row-wise and column-wise scaling, including their respective advantages, limitations, and suitability for different matrix structures, also remains an interesting direction for future study.

\bibliography{iclr2027_conference}
\bibliographystyle{iclr2027_conference}

\appendix
\section*{Appendix}

\section{Derivation of the Explicit Row-wise Form of 
\texorpdfstring{$\psi_C(X)$}{psi\_C(X)} and Column-wise Form of 
\texorpdfstring{$\psi_D(X)$}{psi\_D(X)}}
\label{derivation}

\subsection{Derivation of the explicit row-wise form of \texorpdfstring{$\psi_C(X)$}{psi\_C(X)}}
\label{app:proof_rowwise_proximal}

\begin{proof}
By definition of the row-wise proximal function, we have
\[
\psi_C(X) = \tfrac{1}{2} \|X\|_C^2 = \tfrac{1}{2} \langle X, CX \rangle_F.
\]
Using the row-wise form of the Frobenius inner product,
\[
\langle X, Y \rangle_F = \sum_{i=1}^m \langle X^{(i)}, Y^{(i)} \rangle,
\]
where \(X^{(i)}, Y^{(i)} \in \mathbb{R}^n\) denote the \(i\)-th rows of \(X\) and \(Y\), respectively, we can rewrite $\psi_C(X)$ as
\[
\psi_C(X) = \tfrac{1}{2} \sum_{i=1}^m \langle X^{(i)}, (CX)^{(i)} \rangle.
\]
Noting that the $i$-th row of $CX$ is given by $(CX)^{(i)} = \sum_{j=1}^m C_{ij} X^{(j)}$, it follows that
\[
\psi_C(X) = \tfrac{1}{2} \sum_{i=1}^m \Big\langle X^{(i)}, \sum_{j=1}^m C_{ij} X^{(j)} \Big\rangle.
\]
By linearity of the inner product, we obtain
\[
\psi_C(X) = \tfrac{1}{2} \sum_{i=1}^m \sum_{j=1}^m C_{ij} \langle X^{(i)}, X^{(j)} \rangle.
\]
This completes the derivation, showing the equivalence of the Frobenius representation and the explicit row-wise form of $\psi_C(X)$.
\end{proof}

\subsection{Derivation of the explicit column-wise form of \texorpdfstring{$\psi_D(X)$}{psi\_D(X)}}
\label{app:proof_columnwise_proximal}

\begin{proof}
By definition of the column-wise proximal function, we have
\[
\psi_D(X) = \tfrac{1}{2} \|X\|_D^2 = \tfrac{1}{2} \langle X, X D \rangle_F.
\]
Using the column-wise form of the Frobenius inner product,
\[
\langle X, Y \rangle_F = \sum_{j=1}^n \langle X_{(j)}, Y_{(j)} \rangle,
\]
where \(X_{(j)}, Y_{(j)} \in \mathbb{R}^m\) denote the \(j\)-th columns of \(X\) and \(Y\), respectively, we can rewrite $\psi_D(X)$ as
\[
\psi_D(X) = \tfrac{1}{2} \sum_{j=1}^n \langle X_{(j)}, (X D)_{(j)} \rangle.
\]
Noting that the \(j\)-th column of \(XD\) is given by \((X D)_{(j)} = \sum_{i=1}^n D_{ij} X_{(i)}\), it follows that
\[
\psi_D(X) = \tfrac{1}{2} \sum_{j=1}^n \Big\langle X_{(j)}, \sum_{i=1}^n D_{ij} X_{(i)} \Big\rangle.
\]
By linearity of the inner product, we obtain
\[
\psi_D(X) = \tfrac{1}{2} \sum_{i=1}^n \sum_{j=1}^n D_{ij} \langle X_{(i)}, X_{(j)} \rangle.
\]

This completes the derivation, showing the equivalence of the Frobenius representation and the explicit column-wise form of \(\psi_D(X)\).
\end{proof}

\section{Proof of Corollary~\ref{OMD general bound}}
\label{general OMD proof}

In this section, we provide the complete proof of Corollary~\ref{OMD general bound}. We first establish the one-step regret bound and then derive the final bound. The one-step bound relies on the three-point identity for Bregman matrix divergence stated below.

\begin{lemma}[Three-point Identity for Bregman Matrix Divergence] Let $B_\psi$ be the Bregman matrix divergence w.r.t. \(\psi: \mathcal{X} \to \mathbb{R}\), then for any three matrix points $X, Y \in \mathop{\mathrm{int}} \mathcal{X}$ and \(Z \in \mathcal{X}\), the following identity holds
$$B_\psi(Z; X) + B_\psi(X; Y) - B_\psi(Z; Y) = \langle \nabla \psi(Y) - \nabla \psi(X), Z - X \rangle_F.$$

\end{lemma}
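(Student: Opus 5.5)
The three-point identity is purely algebraic, so I would prove it by expanding each Bregman divergence from its definition and cancelling terms. No convexity is needed. The only requirement is that $\nabla\psi$ exist at the reference points $X$ and $Y$, which is why these are taken in $\mathop{\mathrm{int}}\mathcal{X}$.

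\textbf{Step 1: expand the three divergences.} By the definition of the Bregman Matrix Divergence,
\begin{align*}
B_\psi(Z;X) &= \psi(Z)-\psi(X)-\langle\nabla\psi(X),Z-X\rangle_F,\\
B_\psi(X;Y) &= \psi(X)-\psi(Y)-\langle\nabla\psi(Y),X-Y\rangle_F,\\
B_\psi(Z;Y) &= \psi(Z)-\psi(Y)-\langle\nabla\psi(Y),Z-Y\rangle_F.
\end{align*}

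\textbf{Step 2: form $B_\psi(Z;X)+B_\psi(X;Y)-B_\psi(Z;Y)$ and cancel the $\psi$ values.} The function-value terms cancel:
\[
\psi(Z)-\psi(X)+\psi(X)-\psi(Y)-\psi(Z)+\psi(Y)=0.
\]
What remains is
\[
-\langle\nabla\psi(X),Z-X\rangle_F-\langle\nabla\psi(Y),X-Y\rangle_F+\langle\nabla\psi(Y),Z-Y\rangle_F.
\]

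\textbf{Step 3: combine the two $\nabla\psi(Y)$ terms.} By bilinearity of the Frobenius inner product,
\[
\langle\nabla\psi(Y),(Z-Y)-(X-Y)\rangle_F=\langle\nabla\psi(Y),Z-X\rangle_F.
\]
Adding this to $-\langle\nabla\psi(X),Z-X\rangle_F$ gives $\langle\nabla\psi(Y)-\nabla\psi(X),Z-X\rangle_F$, which is the claimed right-hand side.

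There is no real obstacle here. The only care needed is sign bookkeeping in Step 3.
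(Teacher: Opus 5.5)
Your proposal is correct and follows the paper's proof essentially verbatim: expand the three divergences from the definition, cancel the $\psi$ values, and combine the remaining inner products by bilinearity. The only difference is cosmetic. You merge the two $\nabla\psi(Y)$ terms first, whereas the paper writes the result as $\langle \nabla\psi(X), X-Z\rangle_F + \langle \nabla\psi(Y), Z-X\rangle_F$ before collecting.
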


\begin{proof}
By definition of $B_\psi$, we have
$$\begin{aligned}
B_\psi(Z; X) + B_\psi(X; Y) - B_\psi(Z; Y) = {} & \left[ \psi(Z) - \psi(X) - \langle \nabla \psi(X), Z - X \rangle_F \right] \\
& + \left[ \psi(X) - \psi(Y) - \langle \nabla \psi(Y), X - Y \rangle_F \right] \\
& - \left[ \psi(Z) - \psi(Y) - \langle \nabla \psi(Y), Z - Y \rangle_F \right].
\end{aligned}$$
The $\psi$ terms cancel as $\psi(Z) - \psi(X) + \psi(X) - \psi(Y) - \psi(Z) + \psi(Y) = 0$.
The inner product terms remain, so the left-hand side simplifies to
$$-\langle \nabla \psi(X), Z - X \rangle_F - \langle \nabla \psi(Y), X - Y \rangle_F + \langle \nabla \psi(Y), Z - Y \rangle_F.$$
Combining terms yield $$\langle \nabla \psi(X), X - Z \rangle_F + \langle \nabla \psi(Y), Z - X \rangle_F = \langle \nabla \psi(Y) - \nabla \psi(X), Z - X \rangle_F,$$ matching the right-hand side, proving this lemma.
\end{proof}

Next, we establish the following one-step regret bound.

\begin{lemma}[One-Step Regret Bound]
Let $\{X_t\}_{t=1}^{T+1} \subseteq \mathop{\mathrm{int}} \mathcal{X}$ be the iterates of OMD for matrices with proximal 
functions $\psi_t$, and let $\eta > 0$ be the global step size. Suppose 
that, for each $t$, $\psi_t$ is $\lambda$-strongly convex with respect to 
$\| \cdot \|$ on $\mathcal{X}$, and let $G_t \in \partial f_t(X_t)$ be a 
subgradient of $f_t$ at $X_t$. Then, for all $U \in \mathcal{X}$, we have
\[
\eta \bigl( f_t(X_t) - f_t(U) \bigr) \leq B_{\psi_t}(U; X_t) - B_{\psi_t}(U; X_{t+1}) + \frac{\eta^2}{2 \lambda} \|G_t\|_*^2,
\]
where $\| \cdot \|_*$ denotes the dual norm of $\| \cdot \|$.
\end{lemma}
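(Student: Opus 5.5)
The plan is the standard OMD argument: first-order optimality of the update, then the three-point identity, then convexity of $f_t$ and strong convexity of $\psi_t$.

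\emph{Optimality condition.} Since $X_{t+1}$ minimizes the convex function $X \mapsto \eta\langle G_t, X\rangle_F + B_{\psi_t}(X;X_t)$ over $\mathcal{X}$, and $X_{t+1}\in\mathop{\mathrm{int}}\mathcal{X}$ so that $\psi_t$ is differentiable there, the first-order condition gives, for every $U\in\mathcal{X}$,
\[
\langle \eta G_t + \nabla\psi_t(X_{t+1}) - \nabla\psi_t(X_t),\, U - X_{t+1}\rangle_F \ge 0 .
\]
Rearranged, this reads $\eta\langle G_t, X_{t+1}-U\rangle_F \le \langle \nabla\psi_t(X_t)-\nabla\psi_t(X_{t+1}), X_{t+1}-U\rangle_F$.

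\emph{Three-point identity.} Apply the Three-point Identity with $Z=U$, $X=X_{t+1}$, $Y=X_t$ and $\psi=\psi_t$. It gives
\[
\langle \nabla\psi_t(X_t)-\nabla\psi_t(X_{t+1}), U - X_{t+1}\rangle_F = B_{\psi_t}(U;X_{t+1})+B_{\psi_t}(X_{t+1};X_t)-B_{\psi_t}(U;X_t).
\]
Flipping the sign, the right-hand side of the optimality inequality equals $B_{\psi_t}(U;X_t)-B_{\psi_t}(U;X_{t+1})-B_{\psi_t}(X_{t+1};X_t)$.

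\emph{Linearization.} By convexity of $f_t$ and $G_t\in\partial f_t(X_t)$,
\[
\eta\bigl(f_t(X_t)-f_t(U)\bigr)\le \eta\langle G_t, X_t-U\rangle_F = \eta\langle G_t, X_{t+1}-U\rangle_F + \eta\langle G_t, X_t-X_{t+1}\rangle_F .
\]
Bounding the first term with the previous two steps yields
\[
\eta\bigl(f_t(X_t)-f_t(U)\bigr)\le B_{\psi_t}(U;X_t)-B_{\psi_t}(U;X_{t+1}) + \eta\langle G_t, X_t-X_{t+1}\rangle_F - B_{\psi_t}(X_{t+1};X_t).
\]

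\emph{Absorbing the leftover terms.} The main point of care is showing that the last two terms together are at most $\frac{\eta^2}{2\lambda}\|G_t\|_*^2$. By $\lambda$-strong convexity of $\psi_t$ with respect to $\|\cdot\|$, we have $B_{\psi_t}(X_{t+1};X_t)\ge \frac{\lambda}{2}\|X_{t+1}-X_t\|^2$. By the definition of the dual norm (Hölder), $\eta\langle G_t, X_t-X_{t+1}\rangle_F\le \eta\|G_t\|_*\|X_t-X_{t+1}\|$. Writing $r=\|X_t-X_{t+1}\|$, the two terms are therefore bounded by $\eta\|G_t\|_* r - \frac{\lambda}{2}r^2$. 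This quadratic in $r$ is maximized at $r=\eta\|G_t\|_*/\lambda$, giving the value $\frac{\eta^2}{2\lambda}\|G_t\|_*^2$. The same bound also follows from Young's inequality $ab\le \frac{a^2}{2\lambda}+\frac{\lambda b^2}{2}$. Substituting this in gives the claimed one-step bound.

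The only subtlety is that the first-order condition requires differentiability of $\psi_t$ at $X_{t+1}$. This is guaranteed by the standing assumption that the iterates lie in $\mathop{\mathrm{int}}\mathcal{X}$; for the quadratic proximal functions used in the paper, $\psi_t$ is differentiable everywhere anyway.
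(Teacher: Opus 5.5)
Your proposal is correct and follows essentially the same route as the paper. Both arguments linearize via the subgradient inequality and use the first-order optimality condition of the update. Both then apply the three-point identity with $Z=U$, $X=X_{t+1}$, $Y=X_t$, and absorb $\eta\langle G_t, X_t-X_{t+1}\rangle_F - B_{\psi_t}(X_{t+1};X_t)$ using strong convexity together with Young's inequality (your quadratic maximization in $r$ is the same bound). The only differences are cosmetic: you rearrange the optimality condition before invoking the identity, whereas the paper adds and subtracts the gradient difference, and you bound the leftover term by maximizing the quadratic where the paper cites its matrix Young's inequality lemma.
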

\begin{proof}
Since $ G_t $ is a subgradient of the convex function $ f_t $ at $ X_t $, it follows that
$$f_t(U) \geq f_t(X_t) + \langle G_t, U - X_t \rangle_F.$$
Rearranging terms and multiplying both sides by $ \eta $, we obtain
$$\eta (f_t(X_t) - f_t(U)) \leq \eta \langle G_t, X_t - U \rangle_F.$$
Let $ J(X) = \langle G_t, X \rangle_F + \frac{1}{\eta} B_{\psi_t}(X; X_t) $. Then, the update $ X_{t+1} $ is the minimizer of $ J(X) $ over $ X \in \mathcal{X} $. The gradient of $ J(X) $ at $ X_{t+1} $ is
$$\nabla J(X_{t+1}) = G_t + \frac{1}{\eta} \nabla \psi_t(X_{t+1}) - \frac{1}{\eta} \nabla \psi_t(X_t).$$
By the first-order optimality condition for $ X_{t+1} $, we have
$$\langle G_t + \frac{1}{\eta}  \nabla \psi_t(X_{t+1}) - \frac{1}{\eta} \nabla \psi_t(X_t),  U - X_{t+1} \rangle_F \geq 0, \,\forall\, U \in \mathcal{X}.$$
We now express $ \eta \langle G_t, X_t - U \rangle_F $ using the three-point identity for Bregman matrix divergence. To this end, we first introduce $ X_{t+1} $ into the expression $ X_t - U $, which yields
$$\eta \langle G_t, X_t - U \rangle_F = \eta \langle G_t, X_t - X_{t+1} \rangle_F + \eta \langle G_t, X_{t+1} - U \rangle_F.$$
Then, rewriting the second term by adding and subtracting the gradient difference $\nabla \psi_t(X_{t+1}) - \nabla \psi_t(X_t)$ and rearranging, we get
\begin{align*}
\eta \langle G_t, X_t - U \rangle_F &= \langle \nabla \psi_t(X_{t+1}) - \nabla \psi_t(X_t), U - X_{t+1} \rangle_F\\ 
&\quad+ \langle \nabla \psi_t(X_t) - \nabla \psi_t(X_{t+1}) - \eta G_t, U - X_{t+1} \rangle_F 
 + \langle \eta G_t, X_t - X_{t+1} \rangle_F.
\end{align*}
From the optimality condition for $ X_{t+1} $ established earlier, the second term in the above equality is non-positive, leading to
$$\eta \langle G_t, X_t - U \rangle_F \leq \langle \nabla \psi_t(X_{t+1}) - \nabla \psi_t(X_t), U - X_{t+1} \rangle_F + \langle \eta G_t, X_t - X_{t+1} \rangle_F.$$
Applying the three-point identity for $ B_{\psi_t} $ with $X = X_{t+1}, Y = X_t$ and $Z = U$, we obtain
$$\langle \nabla \psi_t(X_{t+1}) - \nabla \psi_t(X_t), U - X_{t+1} \rangle_F = B_{\psi_t}(U; X_t) - B_{\psi_t}(U; X_{t+1}) - B_{\psi_t}(X_{t+1}; X_t).$$
Substituting this identity, we obtain
$$\eta \langle G_t, X_t - U \rangle_F \leq B_{\psi_t}(U; X_t) - B_{\psi_t}(U; X_{t+1}) - B_{\psi_t}(X_{t+1}; X_t) + \langle \eta G_t, X_t - X_{t+1} \rangle_F.$$
Applying Young's inequality for matrices (Lemma~\ref{Young inequality}), we obtain
$$\langle \eta G_t, X_t - X_{t+1} \rangle_F \leq \frac{\eta^2}{2 \lambda} \|G_t\|_* ^2 + \frac{\lambda}{2} \|X_t - X_{t+1}\|^2.$$
Then, since $ \psi_t $ is $ \lambda $-strongly convex w.r.t. \(\|\cdot\|\), it follows that
$$ B_{\psi_t}(X_{t+1}; X_t) \geq \frac{\lambda}{2} \|X_t - X_{t+1}\|^2.$$
Combining these inequalities, we get
$$\langle \eta G_t, X_t - X_{t+1} \rangle_F - B_{\psi_t}(X_{t+1}; X_t) \leq \frac{\eta^2}{2 \lambda} \|G_t\|_* ^2.$$
Thus, the bound for $ \eta \langle G_t, X_t - U \rangle_F $ simplifies to
$$\eta \langle G_t, X_t - U \rangle_F \leq B_{\psi_t}(U; X_t) - B_{\psi_t}(U; X_{t+1}) + \frac{\eta^2}{2 \lambda} \|G_t\|_* ^2.$$
This completes the proof of the one-step bound.
\end{proof}

\begin{lemma}[Young's Inequality for Matrices]
\label{Young inequality}
Let \(\mathbb{R}^{m \times n}\) be equipped with the Frobenius inner product. Let \(\|\cdot\|\) be a norm on \(\mathbb{R}^{m \times n}\) with its dual norm \(\|\cdot\|_*\) defined by
\[
\|Y\|_* = \sup_{\|X\| \leq 1} |\langle X, Y \rangle_F|.
\]
Then for any \(A, B \in \mathbb{R}^{m \times n}\) and scalar \(\lambda > 0\),
\[
\langle A, B \rangle_F \leq \frac{1}{2 \lambda} \|A\|_*^2 + \frac{\lambda}{2} \|B\|^2.
\]
\end{lemma}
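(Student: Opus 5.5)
The argument is elementary: I will prove the inequality by completing a square, using the definition of the dual norm to bound the inner product by a product of norms and then weighted AM--GM, with no appeal to earlier results in the paper.

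\emph{Step 1 (generalized Cauchy--Schwarz).} I will establish
\[
\langle A, B\rangle_F \le \|A\|_*\,\|B\| \quad \text{for all } A, B.
\]
If $B=0$, both sides vanish. Otherwise set $X = B/\|B\|$, so that $\|X\| = 1$. The definition of the dual norm then gives
\[
|\langle X, A\rangle_F| \le \|A\|_*,
\]
and multiplying by $\|B\|$ yields the claim. Here I use the symmetry of the Frobenius inner product, $\langle A,B\rangle_F=\langle B,A\rangle_F$, to match the argument order in the definition of $\|\cdot\|_*$.

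\emph{Step 2 (scalar Young inequality).} For nonnegative reals $a=\|A\|_*$, $b=\|B\|$ and any $\lambda>0$, expanding
\[
0 \le \Bigl(\tfrac{a}{\sqrt{\lambda}} - \sqrt{\lambda}\, b\Bigr)^2
\]
gives
\[
ab \le \frac{a^2}{2\lambda} + \frac{\lambda b^2}{2}.
\]
Chaining this with Step 1 yields the stated inequality.

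The only step that needs any care is Step 1: it requires the supremum in the definition of $\|\cdot\|_*$ to be taken over $\|X\|\le 1$, with the absolute value included, and the case $B=0$ must be handled separately. Both points are immediate from the definition as written in the lemma, so I expect no real obstacle.
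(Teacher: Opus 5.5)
Your proposal is correct and follows essentially the same route as the paper: bound $\langle A,B\rangle_F \le \|A\|_*\|B\|$ via the definition of the dual norm, then complete the square $\bigl(\sqrt{\lambda}\|B\| - \|A\|_*/\sqrt{\lambda}\bigr)^2 \ge 0$. The paper simply asserts the first step, while your normalization $X = B/\|B\|$ and separate treatment of $B = 0$ spell it out.
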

\begin{proof}
By the definition of the dual norm, we have
\[
\langle A, B \rangle_F \leq \|A\|_* \cdot \|B\|.
\]
To bound the product \(\|A\|_* \|B\|\), consider the non-negative quadratic
\[
\left( \sqrt{\lambda} \|B\| - \frac{\|A\|_*}{\sqrt{\lambda}} \right)^2 \geq 0.
\]
Expanding the square yields
\[
\lambda \|B\|^2 - 2 \|A\|_* \|B\| + \frac{\|A\|_*^2}{\lambda} \geq 0.
\]
Rearranging terms, we get
\[
\|A\|_* \|B\| \leq \frac{1}{2\lambda} \|A\|_*^2 +  \frac{\lambda}{2} \|B\|^2.
\]
\end{proof}
Next, building on the one-step regret bound, we derive the final regret bound.
\begin{proof}
Dividing the one-step bound by $\eta$ and summing over $t=1,\dots,T$ gives
$$
\sum_{t=1}^T \big(f_t(X_t)-f_t(U)\big)
\leq
\frac{1}{\eta}
\sum_{t=1}^T
\Big(
B_{\psi_t}(U;X_t)-B_{\psi_t}(U;X_{t+1})
\Big)
+
\frac{\eta}{2\lambda}
\sum_{t=1}^T \|G_t\|_*^2.
$$
We first rearrange the sum of Bregman divergences as
$$
\begin{aligned}
\sum_{t=1}^T
\big(
B_{\psi_t}(U;X_t)-B_{\psi_t}(U;X_{t+1})
\big)
&=
B_{\psi_1}(U;X_1)-B_{\psi_T}(U;X_{T+1}) \\
&\quad+
\sum_{t=2}^T B_{\psi_t}(U;X_t)
-
\sum_{t=1}^{T-1} B_{\psi_t}(U;X_{t+1}).
\end{aligned}
$$
Re-indexing the first sum on the right-hand side by $t\mapsto t+1$, we obtain
$$
\sum_{t=2}^T B_{\psi_t}(U;X_t)
-
\sum_{t=1}^{T-1} B_{\psi_t}(U;X_{t+1})
=
\sum_{t=1}^{T-1}
\Big(
B_{\psi_{t+1}}(U;X_{t+1})
-
B_{\psi_t}(U;X_{t+1})
\Big).
$$

Therefore,
$$
\begin{aligned}
\sum_{t=1}^T \big(f_t(X_t)-f_t(U)\big)
&\leq
\frac{B_{\psi_1}(U;X_1)}{\eta}
-
\frac{B_{\psi_T}(U;X_{T+1})}{\eta} \\
&\quad+
\frac{1}{\eta}
\sum_{t=1}^{T-1}
\Big(
B_{\psi_{t+1}}(U;X_{t+1})
-
B_{\psi_t}(U;X_{t+1})
\Big)
+
\frac{\eta}{2\lambda}
\sum_{t=1}^T \|G_t\|_*^2.
\end{aligned}
$$
Since \(B_{\psi_T}(U;X_{T+1})\geq0\), the negative term $- B_{\psi_T}(U;X_{T+1})/{\eta}$ can be dropped to obtain the following regret bound
$$
\begin{aligned}
\sum_{t=1}^T \big(f_t(X_t)-f_t(U)\big)
&\leq
\frac{B_{\psi_1}(U;X_1)}{\eta}
+
\frac{1}{\eta}
\sum_{t=1}^{T-1}
\Big(
B_{\psi_{t+1}}(U;X_{t+1})
-
B_{\psi_t}(U;X_{t+1})
\Big) \\
&\quad+
\frac{\eta}{2\lambda}
\sum_{t=1}^T \|G_t\|_*^2.
\end{aligned}
$$
Since the above bound holds for every $U \in \mathcal{X}$, we take 
$U = X^\star \in \argmin_{X \in \mathcal{X}} \sum_{t=1}^T f_t(X)$, 
which yields the stated regret bound in Corollary~\ref{OMD general bound}. 
This completes the proof.
\end{proof}

\section{Proof of the Compact Form of the Bregman Matrix Divergence}
\label{appendix:trace-simplification}

\begin{proposition}
Let $\psi_t^{H_t}(X) = \frac{1}{2} \operatorname{Tr}(X^\top H_t X)$ with $H_t \in \mathbb{R}^{m \times m}$ symmetric and positive definite. Then, for any two matrices $X, Y \in \mathbb{R}^{m \times n}$, the Bregman matrix divergence induced by $\psi_t^{H_t}$ admits the compact form
\[
B_{\psi_t^{H_t}}(X; Y)
= \frac{1}{2} \|X - Y\|_{H_t}^2
= \frac{1}{2} \operatorname{Tr}((X - Y)^\top H_t (X - Y)).
\]
\end{proposition}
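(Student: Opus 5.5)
The plan is to expand the definition of the Bregman matrix divergence and simplify it using linearity of the trace, the cyclic and transpose invariance of the trace, and the symmetry of $H_t$. First we need the gradient of $\psi_t^{H_t}$. For any direction $E\in\mathbb{R}^{m\times n}$,
\[
\psi_t^{H_t}(Y+E) = \tfrac12\operatorname{Tr}(Y^\top H_t Y) + \tfrac12\operatorname{Tr}(E^\top H_t Y) + \tfrac12\operatorname{Tr}(Y^\top H_t E) + \tfrac12\operatorname{Tr}(E^\top H_t E).
\]
Transpose invariance and $H_t^\top = H_t$ give $\operatorname{Tr}(Y^\top H_t E) = \operatorname{Tr}(E^\top H_t Y)$. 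Hence the first-order term is $\operatorname{Tr}(E^\top H_t Y) = \langle H_t Y, E\rangle_F$, and so $\nabla\psi_t^{H_t}(Y) = H_t Y$, as used in the main text.

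Next, substitute into the definition:
\[
B_{\psi_t^{H_t}}(X;Y) = \tfrac12\operatorname{Tr}(X^\top H_t X) - \tfrac12\operatorname{Tr}(Y^\top H_t Y) - \operatorname{Tr}\bigl((H_tY)^\top (X-Y)\bigr).
\]
Using $H_t^\top = H_t$, the last term becomes $-\operatorname{Tr}(Y^\top H_t X) + \operatorname{Tr}(Y^\top H_t Y)$. Collecting terms leaves
\[
\tfrac12\operatorname{Tr}(X^\top H_t X) - \operatorname{Tr}(Y^\top H_t X) + \tfrac12\operatorname{Tr}(Y^\top H_t Y).
\]

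Separately, expand $\tfrac12\operatorname{Tr}\bigl((X-Y)^\top H_t (X-Y)\bigr)$ bilinearly into four trace terms. The two cross terms are $-\tfrac12\operatorname{Tr}(X^\top H_t Y)$ and $-\tfrac12\operatorname{Tr}(Y^\top H_t X)$, and they are equal by the same transpose–symmetry argument, so together they give $-\operatorname{Tr}(Y^\top H_t X)$. The two expressions therefore coincide. Finally, $\|Z\|_{H_t}^2 = \langle Z, H_t Z\rangle_F = \operatorname{Tr}(Z^\top H_t Z)$ with $Z = X-Y$ gives the middle equality.

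The only delicate point is the repeated identification of the cross terms, $\operatorname{Tr}(Y^\top H_t X) = \operatorname{Tr}(X^\top H_t Y)$. It relies on $\operatorname{Tr}(A) = \operatorname{Tr}(A^\top)$ together with the symmetry of $H_t$, and I will state it once as a small auxiliary identity. Positive definiteness is not needed for the identity itself; it only guarantees that $\|\cdot\|_{H_t}$ is a norm.
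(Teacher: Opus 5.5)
Your proposal is correct and follows essentially the same route as the paper: substitute $\nabla\psi_t^{H_t}(Y)=H_tY$ into the definition, expand by linearity of the trace, and match the result against the expansion of $\tfrac12\operatorname{Tr}\bigl((X-Y)^\top H_t(X-Y)\bigr)$ using $\operatorname{Tr}(X^\top H_t Y)=\operatorname{Tr}(Y^\top H_t X)$. You also verify the gradient formula, which the paper takes from the main text, and you identify the cross terms only under the trace; this is slightly cleaner than the paper, which writes that step as an identity between matrices.
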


\begin{proof}
We start from the trace-based expression of the Bregman matrix divergence induced by $\psi_t^{H_t}$
\[
B_{\psi_t^{H_t}}(X; Y)
= \frac{1}{2} \operatorname{Tr}(X^\top H_t X)
- \frac{1}{2} \operatorname{Tr}(Y^\top H_t Y)
- \operatorname{Tr}\big(Y^\top H_t (X - Y)\big).
\]
Expanding the last term using linearity of the trace gives
\[
-\operatorname{Tr}\big(Y^\top H_t (X - Y)\big)
= -\operatorname{Tr}(Y^\top H_t X) + \operatorname{Tr}(Y^\top H_t Y).
\]
Substituting back and combining terms into a single trace, we obtain
\[
\begin{aligned}
B_{\psi_t^{H_t}}(X; Y) 
&= \frac{1}{2} \operatorname{Tr}(X^\top H_t X) - \frac{1}{2} \operatorname{Tr}(Y^\top H_t Y) 
   - \operatorname{Tr}(Y^\top H_t X) + \operatorname{Tr}(Y^\top H_t Y) \\
&= \frac{1}{2} \operatorname{Tr}(X^\top H_t X - 2 Y^\top H_t X + Y^\top H_t Y).
\end{aligned}
\]
Next, we separate the cross terms
\[
X^\top H_t X - 2 Y^\top H_t X + Y^\top H_t Y
= X^\top H_t X - Y^\top H_t X - Y^\top H_t X + Y^\top H_t Y.
\]
Since \(H_t\) is symmetric, i.e., $H_t^\top = H_t$, we have \((X^\top H_t Y)^\top = Y^\top H_t X\), which implies that \(\operatorname{Tr}(X^\top H_t Y) = \operatorname{Tr}(Y^\top H_t X)\). Using this, the expression can be rewritten as
\[
X^\top H_t X - X^\top H_t Y - Y^\top H_t X + Y^\top H_t Y.
\]
Recognizing this as the expansion of a quadratic form, we have
\[
(X - Y)^\top H_t (X - Y) = X^\top H_t X - X^\top H_t Y - Y^\top H_t X + Y^\top H_t Y.
\]
Hence,
\[
X^\top H_t X - 2 Y^\top H_t X + Y^\top H_t Y = (X - Y)^\top H_t (X - Y),
\]
and the Bregman matrix divergence can be expressed in compact quadratic form as
\[
B_{\psi_t^{H_t}}(X; Y) = \frac{1}{2} \operatorname{Tr}((X - Y)^\top H_t (X - Y)) = \frac{1}{2} \|X - Y\|_{H_t}^2.
\]
This completes the proof.
\end{proof}

\section{Derivation of the Dual Norm of the \texorpdfstring{$H_t$}{H\_t}-induced Matrix Norm}
\label{app:induced_norms}

\begin{proposition}
Let $H_t \in \mathbb{R}^{m \times m}$ be symmetric and positive definite. For any $X \in \mathbb{R}^{m \times n}$,  the $H_t$-induced matrix norm is $
\|X\|_{H_t} = \sqrt{\langle X, H_t X \rangle_F}$
, then for any $G \in \mathbb{R}^{m \times n}$, the corresponding dual norm is given by
\[
\|G\|_{H_t}^* = \sup_{X \neq 0} \frac{\langle G, X \rangle_F}{\|X\|_{H_t}} = \|G\|_{H_t^{-1}} = \sqrt{\operatorname{Tr}(G^\top H_t^{-1} G)}.
\]
Equivalently, the squared dual norm is
\[
\|G\|_{H_t}^{*2} = \|G\|_{H_t^{-1}}^2 = \operatorname{Tr}(G^\top H_t^{-1} G).
\]
\end{proposition}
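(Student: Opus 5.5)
The plan is to reduce the matrix dual norm to a standard Euclidean computation by a change of variables using the symmetric positive definite square root $H_t^{1/2}$. Since $H_t \succ 0$ and symmetric, $H_t^{1/2}$ exists, is symmetric and invertible, and $H_t^{-1/2} = (H_t^{1/2})^{-1}$. For any $X \in \mathbb{R}^{m\times n}$, set $Z = H_t^{1/2} X$. Then
\[
\|X\|_{H_t}^2 = \operatorname{Tr}(X^\top H_t X) = \operatorname{Tr}(Z^\top Z) = \|Z\|_F^2 .
\]
The map $X \mapsto Z$ is a bijection of $\mathbb{R}^{m\times n}$ onto itself, so the supremum over $X \neq 0$ equals the supremum over $Z \neq 0$.

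Next I would rewrite the numerator in the new variable, using cyclicity of the trace and symmetry of $H_t^{-1/2}$:
\[
\langle G, X\rangle_F = \operatorname{Tr}(G^\top H_t^{-1/2} Z) = \langle H_t^{-1/2} G, Z\rangle_F .
\]
The dual norm therefore becomes $\sup_{Z\neq 0} \langle H_t^{-1/2}G, Z\rangle_F / \|Z\|_F$. This is the dual of the Frobenius norm, which is self-dual. By Cauchy--Schwarz for the Frobenius inner product the ratio is at most $\|H_t^{-1/2}G\|_F$, and taking $Z = H_t^{-1/2}G$ (when $G\neq 0$; the case $G=0$ is trivial) attains equality.

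Finally, $\|H_t^{-1/2}G\|_F^2 = \operatorname{Tr}(G^\top H_t^{-1/2}H_t^{-1/2}G) = \operatorname{Tr}(G^\top H_t^{-1}G) = \langle G, H_t^{-1}G\rangle_F = \|G\|_{H_t^{-1}}^2$. This gives both stated forms, and squaring gives the squared version.

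The only step needing care is justifying the square root and the bijectivity of the substitution, which follows from the spectral theorem. In the diagonal case $H_t = \mathrm{Diag}(s_t)$ used in the paper this step is explicit, since $H_t^{1/2} = \mathrm{Diag}(\sqrt{s_t})$. Alternatively, one could attain the supremum directly at $X = H_t^{-1}G$ and bound the ratio above by Cauchy--Schwarz in the $H_t$-inner product. I would present the substitution argument as the main proof.
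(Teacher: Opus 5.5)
Your proposal is correct and follows essentially the same route as the paper: the substitution $Y = H_t^{1/2}X$ turns $\|X\|_{H_t}$ into $\|Y\|_F$ and the numerator into $\langle H_t^{-1/2}G, Y\rangle_F$, and Cauchy--Schwarz then gives the bound, with equality at $Y \propto H_t^{-1/2}G$. Your explicit remarks that the substitution is a bijection and that the case $G=0$ is trivial are small points the paper leaves implicit.
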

\begin{proof}
By definition, the dual norm with respect to the Frobenius inner product is
\[
\|G\|_{H_t}^* = \sup_{X \neq 0} \frac{\langle G, X \rangle_F}{\|X\|_{H_t}}.
\]
To simplify the supremum, consider the matrix square root of $H_t$, denoted $H_t^{1/2}$, which satisfies $H_t = H_t^{1/2} H_t^{1/2}$. Defining $Y = H_t^{1/2} X$, we can invert this relation as $X = H_t^{-1/2} Y$, where $H_t^{-1/2} = (H_t^{1/2})^{-1}.$
This is well-defined since $H_t^{1/2}$ is invertible ($H_t \succ 0$). Then
\[
\langle G, X \rangle_F = \langle G, H_t^{-1/2} Y \rangle_F = \operatorname{Tr}\big(G^\top H_t^{-1/2} Y\big).
\]
Since $H_t^{-1/2}$ is symmetric (being the inverse of a symmetric matrix), i.e., $H_t^{-1/2} = (H_t^{-1/2})^\top$, we can rewrite the trace as
\[
\operatorname{Tr}\big(G^\top H_t^{-1/2} Y\big) = \operatorname{Tr}\big((H_t^{-1/2} G)^\top Y\big) = \langle H_t^{-1/2} G, Y \rangle_F.
\]
Moreover, substituting $X = H_t^{-1/2} Y$ into the $H_t$-induced norm gives
\[
\|X\|_{H_t} = \sqrt{\langle X, H_t X \rangle_F} = \sqrt{\langle H_t^{-1/2} Y, H_t (H_t^{-1/2} Y) \rangle_F}.
\]
Using $H_t H_t^{-1/2} = H_t^{1/2}$ (by the definition of the matrix square root), we have
\[
\langle H_t^{-1/2} Y, H_t H_t^{-1/2} Y \rangle_F = \langle H_t^{-1/2} Y, H_t^{1/2} Y \rangle_F = \operatorname{Tr}\big((H_t^{-1/2} Y)^\top (H_t^{1/2} Y)\big).
\]
Using the cyclic property of the trace, and the properties of $H_t^{-1/2}$, we get
\[
\operatorname{Tr}\big((H_t^{-1/2} Y)^\top (H_t^{1/2} Y)\big) = \operatorname{Tr}\big(Y^\top (H_t^{-1/2} H_t^{1/2}) Y\big) = \operatorname{Tr}(Y^\top Y) = \langle Y, Y \rangle_F = \|Y\|_F^2.
\]
Thus, the $H_t$-induced norm of $X$ reduces exactly to the standard Frobenius norm of $Y$
\[
\|X\|_{H_t} = \|Y\|_F.
\]
Hence, the dual norm becomes
\[
\|G\|_{H_t}^* = \sup_{Y \neq 0} \frac{\langle H_t^{-1/2} G, Y \rangle_F}{\|Y\|_F}.
\]
By the Cauchy--Schwarz inequality, the supremum is attained when $Y$ is a scalar multiple of $H_t^{-1/2} G$, which gives
\[
\|G\|_{H_t}^* = \sqrt{\langle H_t^{-1/2} G, H_t^{-1/2} G \rangle_F} = \sqrt{\operatorname{Tr}\big( (H_t^{-1/2} G)^\top (H_t^{-1/2} G) \big)}.
\]
By the symmetry of $H_t^{-1/2}$, we have
\[
\operatorname{Tr}\big( (H_t^{-1/2} G)^\top (H_t^{-1/2} G) \big) = \operatorname{Tr}\big( G^\top H_t^{-1/2} H_t^{-1/2} G \big).
\]
Since \(H_t^{1/2}\) is the square root of \(H_t\), we have
\[
H_t = H_t^{1/2} H_t^{1/2} \quad \implies \quad H_t^{-1} = H_t^{-1/2} H_t^{-1/2}.
\]
Therefore,
\[
\operatorname{Tr}\big( (H_t^{-1/2} G)^\top (H_t^{-1/2} G) \big) = \operatorname{Tr}\big( G^\top H_t^{-1} G \big) = \langle G, H_t^{-1} G \rangle_F.
\]
Hence, we obtain
\[
\|G\|_{H_t}^* = \|G\|_{H_t^{-1}} = \sqrt{\operatorname{Tr}\big( G^\top H_t^{-1} G \big)}.
\]
This completes the proof. Note that the derivation does not require $H_t$ to be diagonal.
\end{proof}

\section{A Sufficient Condition for Absorbing the Stabilizer-Induced Term}
\label{stabilizer}

In this section, we derive a sufficient condition under which the additional 
regret contribution induced by the stabilizer $\delta$ can be absorbed by the 
terminal Bregman term.

Specifically, it suffices to require
\[
\,\max_{\mathclap{\scriptscriptstyle{1 \le u \le T}}}\, \| X^* - X_u \|_{:2,\infty}^2 \, m \delta \;\le\; 2 B_{\psi_T^{H_T}}(X^*; X_{T+1}).
\]
Note that $B_{\psi_T^{H_T}}(X^*; X_{T+1})$ still contains $\delta$. Expressing it in row-wise form and substituting
\(s_{T,i} = \delta + \sqrt{\sum_{t=1}^{T} \| G_{t,i,:} \|_2^2},\)
we have
\begin{align*}
B_{\psi_T^{H_T}}(X^*\!; X_{T+1})
&= \frac{1}{2}\sum_{i=1}^m s_{T,i}
   \| X^*_{i,:} - X_{T+1,i,:} \|_2^2 \\
&= \frac{1}{2} \sum_{i=1}^m
   \bigg(\delta + \sqrt{\sum_{t=1}^{T} \| G_{t,i,:} \|_2^2}\bigg)
   \| X^*_{i,:} - X_{T+1,i,:} \|_2^2.
\end{align*}
Thus, it is sufficient to require
\[
\max_{\scriptscriptstyle{1\le u \le T}}\!\| X^* - X_u \|_{:2,\infty}^2 \, m \delta \;\le\; \sum_{i=1}^m \sqrt{\sum_{t=1}^{T} \| G_{t,i,:} \|_2^2} \, \| X^*_{i,:} - X_{T+1,i,:} \|_2^2.
\]
Solving for $\delta$ yields an explicit upper bound
\[
\delta \;\le\; { \sum_{i=1}^m \sqrt{\sum_{t=1}^{T} \| G_{t,i,:} \|_2^2} \, \| X^*_{i,:} - X_{T+1,i,:} \|_2^2} \Big/ {\Big(m\max_{1\le u \le T} \| X^* - X_u \|_{:2,\infty}^2}\Big).
\]
Hence, whenever this condition holds, the stabilizer-induced regret 
contribution can be fully absorbed by the negative terminal Bregman term, 
so that no additional explicit $\delta$-dependent penalty remains in the 
resulting regret bound.

Even when the above sufficient condition is not imposed, the presence of the
stabilizer does not affect the asymptotic order of the regret bound. In
particular, if $\delta = O(1)$ and the iterates remain in a bounded domain,
the additional stabilizer-induced contribution
$m\delta \max_{1 \le u \le T} \|X^* - X_u\|_{:2,\infty}^2$
is $O(1)$ and hence does not alter the leading asymptotic order of a
sublinear regret bound. Thus, the condition above provides a stronger
guarantee of complete absorption, while substantially weaker conditions are
sufficient to preserve the leading asymptotic order of the regret bound.

\section{Column-AdaGrad Outperforms Entry-Wise AdaGrad: A Motivating Example}
\label{col-example}

Recall that when applied to matrices, entry-wise AdaGrad treats an $m\times n$
matrix as a vector in $\mathbb{R}^{mn}$ and applies independent per-coordinate
step sizes. Its regret bound takes the form
\[
R_{\mathrm{vec}}(T) \le \sqrt{2}\, \mathcal{D}_\infty
\sum_{i=1}^{m}\sum_{j=1}^{n} \sqrt{\sum_{t=1}^{T} G_{t,i,j}^2},
\]
where $\mathcal{D}_\infty = \sup_{X,Y\in\mathcal{X}}\|X-Y\|_\infty$ is the
diameter of $\mathcal{X}$ under the entry-wise $\ell_\infty$ norm.

We construct a scenario in which the data matrices are column-wise sparse, which leads to column-sparse gradients. We fix an integer $K>0$ and set the horizon $T = Kn$. For each column $j\in\{1,\dots,n\}$, define the column-activated matrix $D_{(j)} = \mathbf{1}_m e_j^\top$, where $e_j$ is the $j$-th standard basis vector in $\mathbb{R}^n$ and $\mathbf{1}_m$ is the all-ones vector in $\mathbb{R}^m$, so that $D_{(j)}$ has ones in its $j$-th column and zeros elsewhere. At each round $t = 1,\dots,T$, a single column is activated cyclically: $j_t = ((t-1)\bmod n)+1$ and $D_t = D_{(j_t)}$. We consider the hinge loss

$$
f_t(X) = \max\big\{0,\; 1 - y_t \langle X, D_t\rangle_F\big\},
$$
where $y_t = (-1)^t$ is the label. We take the decision set to be the Frobenius norm ball
$\mathcal{X} = \{X \in \mathbb{R}^{m\times n} : \|X\|_F \le B\}$ with
radius $B < 1/\sqrt{m}$. Since $\|D_t\|_F = \sqrt{m}$, this choice ensures
that the margin is violated at every round, regardless of the player's
action: for all $t$ and all $X_t \in \mathcal{X}$,
$$
y_t \langle X_t, D_t\rangle_F
\;\le\; \big|\langle X_t, D_t\rangle_F\big|
\;\le\; \|X_t\|_F\|D_t\|_F
\;\le\; B\sqrt{m} \;<\; 1 .
$$
Consequently, the subgradient of $f_t$ at $X_t$ is
$G_t = -y_t D_t = (-1)^{t+1} D_{(j_t)}$, so that only the $j_t$-th column of
$G_t$ is nonzero, with all entries equal to $\pm 1$. By construction, each
column $j$ is activated exactly $K$ times, and a direct computation yields
$$
\sum_{i=1}^{m} \sum_{j=1}^{n} \sqrt{\sum_{t=1}^{T} G_{t,i,j}^2}
= m n \sqrt{K},\quad
\sum_{j=1}^{n} \sqrt{\sum_{t=1}^{T} \|G_{t,:,j}\|_2^2}
= \sum_{j=1}^{n} \sqrt{\sum_{t=1}^{T} \sum_{i=1}^{m} G_{t,i,j}^2}
= n \, \sqrt{m K}.
$$
Moreover, the two diameter measures are
$\mathcal{D}_\infty = \sup_{X,Y\in\mathcal{X}}\|X-Y\|_\infty = 2B$ and
$\mathcal{D}_{2:,\infty} = \sup_{X,Y\in\mathcal{X}}\|X-Y\|_{2:,\infty} = 2B$. Both suprema are attained, for example, by taking $X = BE_{11}$ and $Y = -BE_{11}$, where $E_{11}$ denotes the matrix whose only nonzero entry is $1$ at position $(1,1)$. Substituting the above quantities into the two respective regret bounds gives
$$
R_{\mathrm{vec}}(T) \le 2\sqrt{2} B \cdot m n \sqrt{K}, \quad
R_{\mathrm{col}}(T) \le 2\sqrt{2} B \cdot n \, \sqrt{m K}.
$$
The ratio of these two regret upper bounds is \(\sqrt{m}\), showing that Column-AdaGrad achieves a $\sqrt{m}$-factor improvement in the regret bound over the entry-wise counterpart in this column-sparse setting.

\section{Variable-Depth Stacked MLP Training at a Smaller Learning Rate}
\label{small_lr}

\begin{figure}
    \centering
    \includegraphics[width=0.97\linewidth]{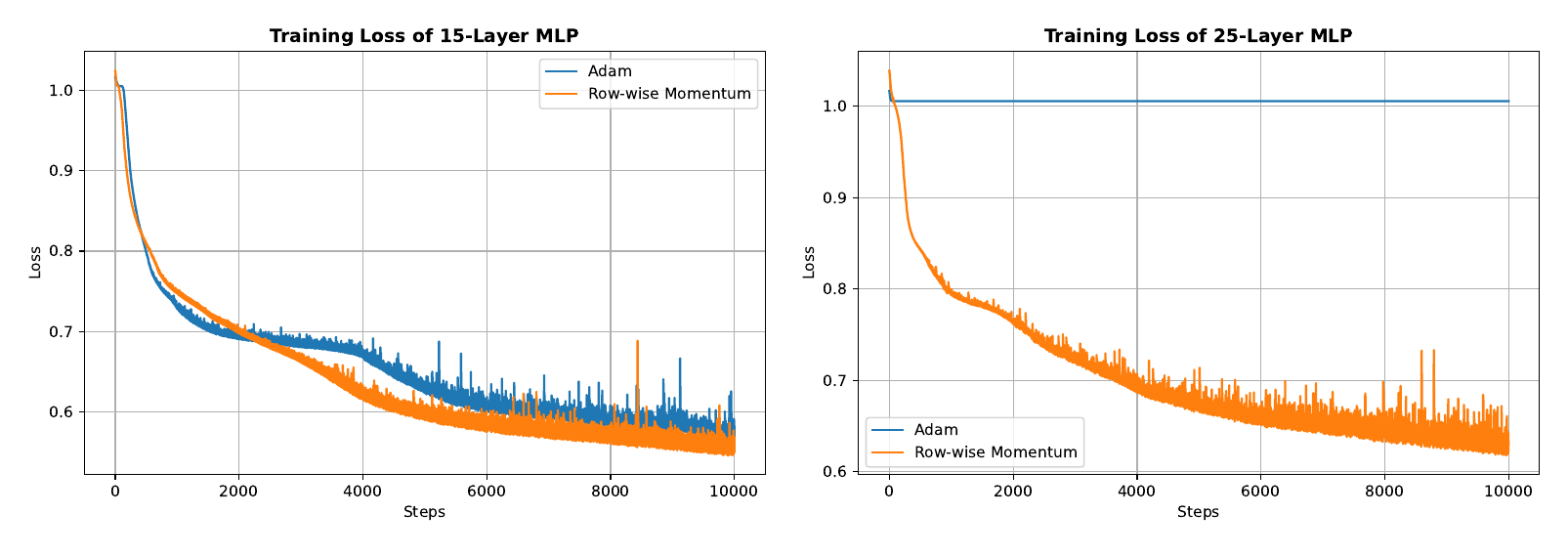}
    \caption{Training loss of Adam and Row-wise Momentum on stacked MLPs of different depths with \(lr=0.001\). Compared with \(lr=0.01\), both methods exhibit reduced loss fluctuations at $15$ layers, while entry-wise scaling still fails to train at $25$ layers.
    }
    \label{fig3}
\end{figure}

Under exactly the same experimental configuration, we additionally reduce the learning rate from \(0.01\) to \(0.001\) and evaluate both optimizers across different network depths. The experimental results are shown in Figure~\ref{fig3}. At $15$ layers, both methods exhibit substantially smaller loss fluctuations, while Row-wise Momentum still achieves a lower training loss than Adam. At $25$ layers, Row-wise Momentum remains able to train effectively, whereas Adam still completely fails to train. These results further indicate that row-wise scaling supports stable training at a learning rate $10\times$ larger than a rate at which entry-wise scaling already fails to train. This highlights a key advantage of row-wise adaptive scaling: its ability to maintain stable training at larger learning rates and greater network depths.

\end{document}

%% file: math_commands.tex
\usepackage{amsmath,amsfonts,bm}

\def\eqref#1{equation~\ref{#1}}

\def\1{\bm{1}}

\DeclareMathAlphabet{\mathsfit}{\encodingdefault}{\sfdefault}{m}{sl}
\SetMathAlphabet{\mathsfit}{bold}{\encodingdefault}{\sfdefault}{bx}{n}

\DeclareMathOperator*{\argmin}{arg\,min}